\PassOptionsToPackage{table}{xcolor}
\documentclass[11pt]{article}
\usepackage[preprint]{acl}

\usepackage{times}
\usepackage{latexsym}
\usepackage[T1]{fontenc}
\usepackage[utf8]{inputenc}
\usepackage{microtype}
\usepackage{inconsolata}

\usepackage{amsmath,amsfonts,mathtools,amssymb,amsthm}
\usepackage{mathrsfs}

\newcommand{\bm}[1]{\boldsymbol{#1}}

\usepackage{booktabs}
\usepackage{tabularx}
\usepackage{array}
\usepackage{multirow}
\usepackage{makecell}
\usepackage{threeparttable}
\usepackage{arydshln}
\usepackage{xcolor}
\usepackage{colortbl}

\usepackage{graphicx}
\usepackage{wrapfig}
\usepackage{subcaption}
\usepackage{adjustbox}
\usepackage{float}

\usepackage{algorithm}
\usepackage{algpseudocode}
\algrenewcommand\textproc{\text}

\usepackage{tikz}
\usepackage{tcolorbox}

\usepackage{ragged2e}
\usepackage{enumitem}
\usepackage{balance}
\usepackage{pifont}
\usepackage{CJKutf8}
\usepackage{hyperref}

\newtheorem{theorem}{Theorem}[section]

\definecolor{mygreen}{HTML}{00B050}
\definecolor{myorange}{HTML}{ED7D31}
\definecolor{rowgray}{gray}{0.97}
\definecolor{avgblue}{RGB}{210,230,250}
\definecolor{headergray}{RGB}{160,160,160}
\definecolor{uc_color}{rgb}{0.99,0.24,0.63}
\definecolor{hc_color}{rgb}{0.02,0.51,0.51}
\definecolor{tc_color}{rgb}{0.99,0.55,0.09}
\definecolor{color1}{cmyk}{0.216,0.176,0,0}
\definecolor{color2}{cmyk}{0.059,0.235,0.392,0}

\newcommand{\hlg}[1]{\colorbox{green!15}{\strut #1}}
\newcommand{\hlp}[1]{\colorbox{purple!10}{\strut #1}}

\renewcommand{\arraystretch}{1.1}

\tikzstyle{mybox} = [draw=black, very thick, rectangle, rounded corners, inner sep=10pt, inner ysep=13pt]
\tikzstyle{fancytitle} = [fill=black, text=white]

\title{\textit{Fishing Out Free Riders}: Shapley-Based Reward Attribution for Parallel Reasoning via Reinforcement Learning}

\author{
  \textbf{Wentao Zhang\textsuperscript{1}\thanks{Equal contribution.}}, 
  \textbf{Haoyu Zhang\textsuperscript{2}\footnotemark[1]}, 
  \textbf{Xinke Jiang\textsuperscript{3}\footnotemark[1]}, 
  \textbf{Yuxuan Cheng\textsuperscript{4}}, 
  \textbf{Yuhan Pan\textsuperscript{1}} \\[0.1in] 
  \textbf{Miao Li\textsuperscript{1}}, 
  \textbf{Zhipeng Qiao\textsuperscript{1}}, 
  \textbf{Tao Feng\textsuperscript{5}}, 
  \textbf{Zhen Tao\textsuperscript{5}}, 
  \textbf{Dengji Zhao\textsuperscript{1}\thanks{Corresponding author.}} \\[0.15in]
  \textsuperscript{1}ShanghaiTech University \quad
  \textsuperscript{2}City University of Hong Kong \quad
  \textsuperscript{3}Peking University \\
  \textsuperscript{4}The Chinese University of Hong Kong, Shenzhen \quad
  \textsuperscript{5}Zhejiang University \\[0.15in]
  {wentaozh2001@gmail.com, hzhang2838-c@my.cityu.edu.hk, thinkerjiang@foxmail.com} \\
  {yuxuancheng1@link.cuhk.edu.cn, zhaodj@shanghaitech.edu.cn} \\[0.25in]
}

\begin{document}
\maketitle
\begin{abstract}
Large Language Models (LLMs) excel at multi-step reasoning, yet current parallel reasoning approaches often fail to distinguish the contributions of individual reasoning paths. Many paths may be redundant, misleading, or even detrimental, but outcome-level rewards assign uniform reward, leading to ambiguous learning signals and unstable training. 
We propose \textbf{Parallel Shapley}, a reinforcement learning framework that attributes fine-grained, path-level contributions in multi-path reasoning. Treating each path as a player in a cooperative game, we leverage Shapley values to quantify marginal contributions, using a generative reward model to evaluate path utilities and Monte Carlo sampling for efficient approximation. 
Experiments on mathematical reasoning benchmarks show that Parallel Shapley outperforms existing baselines while providing more stable and interpretable training. Our framework effectively ``fishes out the free riders,'' assigning reward proportionally and improving multi-path reasoning in LLMs.
\end{abstract}

\section{Introduction}
\textbf{Large Language Models (LLMs)} have achieved impressive performance across a wide range of natural language processing tasks~\cite{kaplan2020scaling,yang2024qwen2,vu2024gptvoicetasker}. 
More recently, inspired by human cognition~\cite{lewis2025brute}, advances in \emph{deep reasoning}~\cite{sun2024surveyreasoningfoundationmodels} have significantly improved the logical consistency and multi-step reasoning capabilities of LLMs, as exemplified by DeepSeek-R1, which shows that increasing reasoning depth during inference leads to substantial gains on challenging reasoning tasks~\cite{guo2025deepseek,shao2024deepseekmath}.

This success of such human-inspired deep reasoning models~\cite{guo2025deepseek,shao2024deepseekmath} naturally raises a further question: \textit{beyond increasing reasoning depth, can models better emulate how humans explore \textbf{multiple candidate solutions} and synthesize them into final decisions?}
Recent studies investigate this idea through \emph{test-time scaling (TTS)}~\cite{zhang2025surveytesttimescalinglarge}, where multiple candidate outputs are generated in parallel and evaluated via metric Pass@$k$ to reflect whether a correct solution exists among the trajectories.
However, such approaches mainly estimate an upper bound of model performance~\cite{Brown2024LargeLanguageMonkeys}, which treat reasoning trajectories independently.
In contrast, recent models~\cite{comanici2025gemini,guo2025deepseek,wen2025parathinker} move beyond by explicitly aggregating complementary information across parallel trajectories---referred to as \emph{\textbf{Parallel Reasoning}}---thereby mitigating premature convergence to suboptimal solutions. 
This paradigm directly responds to the above question and motivates the research in this work.

Early \emph{Parallel Reasoning} work~\cite{Yang2025MultiverseYL,Chen2025ASPDUA,wen2025parathinker}, employs supervised fine-tuning (SFT) methods to imitate parallel thinking by learning from pre-constructed trajectories but often falls short on deep reasoning.
More recently, motivated by the potential of on-policy reinforcement learning, Parallel-R1~\cite{Zheng2025ParallelR1TP} proposes a framework to teach models to parallel thinking and reasoning via Group Relative Policy Optimization~\cite{shao2024deepseekmath}.
However, existing SFT and RL-based approaches, especially Parallel-R1, \textbf{still lack fine-grained modeling of individual reasoning paths}. Since parallel paths may be complementary or even conflicting, explicitly modeling interactions and differentiating path contributions at the process level is crucial for improving inference-time reasoning quality.
\begin{figure}
    \centering
    \includegraphics[width=1.0\linewidth]{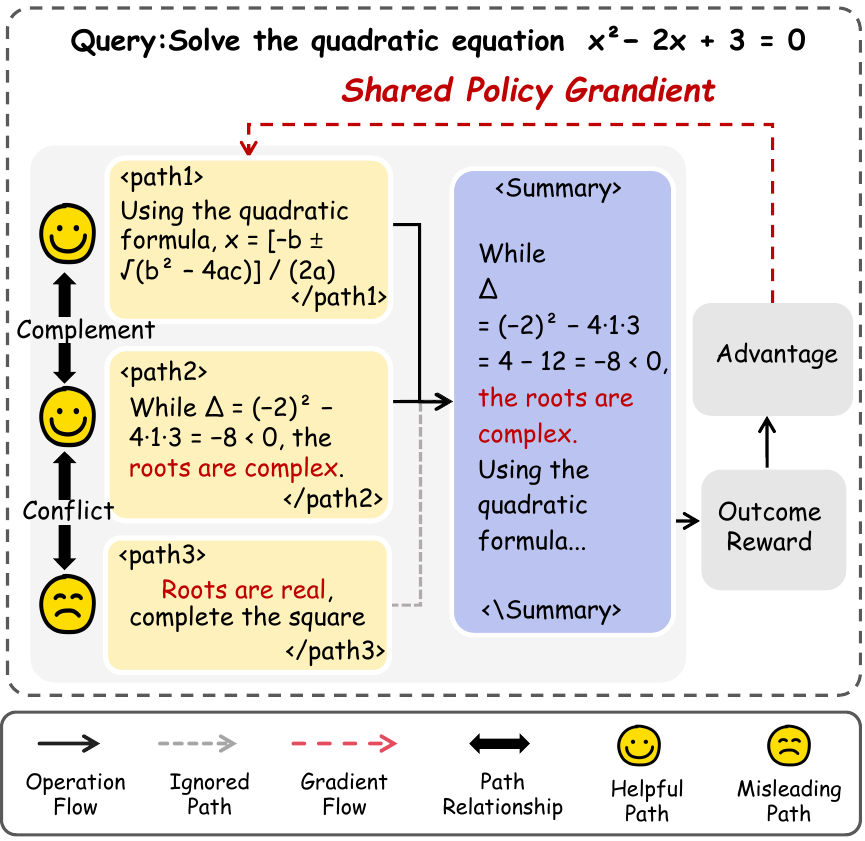}
    \vspace{0pt}
    \caption{Misleading path and helpful path shared policy gradient without recognizing. When solving the problem, $path_1$ and $path_2$ provide complementary and helpful solutions, while $path_3$ misleads the prediction.}
    \label{fig:intro}
    \vspace{0pt} 
\end{figure}

This limitation motivates our approach: we propose a \textbf{Parallel Thinking RL framework} that enables fine-grained, path-level control over reasoning trajectories. 
Nevertheless, effectively designing such a framework remains the following challenge: 

\noindent\textit{\textbf{Challenge: Quantifying path-level contributions and designing corresponding rewards.}}
In parallel reasoning, reasoning trajectories are inherently heterogeneous and \textbf{not independent}: some provide complementary and critical information that jointly improves the final prediction, while others introduce redundant, conflicting, or even detrimental evidence.
Existing parallel thinking RL frameworks~\cite{Zheng2025ParallelR1TP} primarily compute rewards at the aggregated outcome level and uniformly assign them to all reasoning trajectories.
However, since the summary is inherently \textbf{a compressed representation of multiple paths}, outcome-level rewards fail to reflect the true marginal contribution of each individual path.
As a result, erroneous, redundant, or misleading reasoning trajectories may still receive positive policy gradient updates when their information is absorbed or overridden during summarization---an instance of reward hacking~\cite{shihab2025detecting}.
More fundamentally, under \textbf{intra- and inter-path dependencies}, the marginal contribution of a single path cannot be meaningfully evaluated in isolation, as its usefulness is defined by its interaction with other paths, whether complementary or conflicting.

To address this challenge, we propose \textbf{Parallel Shapley}, a fine-grained reinforcement learning framework for multi-path reasoning via Group Relative Policy Optimization.
In implementation, we design a \textbf{hierarchical reward mechanism}: an \emph{outcome reward} at summary level based on correctness, and a \emph{process reward} at path level that captures each reasoning trajectory's marginal contribution.
To compute process rewards, we formalize the parallel reasoning process as a cooperative game, where each reasoning path is treated as a player.
We employ a \emph{Generative Reward Model} to evaluate the utility of different path combinations, and leverage \emph{Shapley values} as process rewards to quantify each path's contribution to the final outcome.
To circumvent the exponential complexity of exact Shapley value computation, we adopt Monte Carlo sampling to efficiently approximate marginal contributions by sampling permutations of paths, reducing the otherwise intractable combinatorial cost to a practical and scalable estimation procedure. Extensive experiments on multiple benchmarks demonstrate that \textbf{Parallel Shapley} consistently outperforms strong baselines, achieving an average of \textbf{42.3\%} improvement on Pass@16 compared to the SOTA parallel-thinking model, while exhibiting improved training stability and faster convergence.
In summary, our contributions are as follows:
\begin{itemize}[leftmargin=*]
    \item We propose \textbf{Parallel Shapley}, the first framework to introduce cooperative game theory and Shapley value attribution into parallel thinking, enabling fine-grained quantification of path-level marginal contributions.
    \item We design a Monte Carlo Shapley-based reward estimation method combined with a \emph{Generative Reward Model} to disentangle path-level process rewards from outcome rewards, addressing the intrinsic difficulty of reward attribution in summary-based parallel reasoning frameworks.
    \item Our approach demonstrates substantial and consistent performance gains across diverse benchmarks, accompanied by enhanced optimization stability and accelerated learning dynamics.
\end{itemize}

\section{Related Work}
\subsection{Parallel Reasoning}
Parallel Reasoning can be broadly categorized into inference-time and training-based approaches.
Inference-time methods explore multiple reasoning paths through brute-force sampling~\cite{Wang2022SelfConsistency,Brown2024LargeLanguageMonkeys} or heuristic-guided tree search, such as Tree of Thoughts~\cite{Yao2023Tree} and Monte Carlo Tree Search~\cite{Zhang2024Accessing}.
These approaches \textbf{rely on fixed search strategies or hand-crafted heuristics} and do not adapt the model's reasoning behavior through learning.
Training-based methods aim to internalize parallel reasoning capabilities within the model.
Under SFT, Multiverse~\cite{Yang2025MultiverseYL} models parallel generation via latent path merging decisions; ASPD~\cite{Chen2025ASPDUA} exploits intrinsic parallelism for adaptive decoding; and ParaThinker~\cite{wen2025parathinker} introduces a native parallel paradigm to scale test-time computation.
More recently, Parallel-R1~\cite{Zheng2025ParallelR1TP} adopts a reinforcement learning framework with a progressive curriculum to encourage multi-path exploration.
However, existing training-based approaches predominantly \textbf{rely on outcome-level rewards shared across all reasoning paths}, leaving fine-grained reward assignment among parallel trajectories underexplored.

\subsection{Reinforcement Learning for LLMs}
Reinforcement learning (RL) has emerged as a powerful paradigm for improving LLM alignment and reasoning, with recent work such as OpenAI-O1~\cite{openaio1} and DeepSeek-R1~\cite{guo2025deepseek} demonstrating strong performance on complex tasks. 
Methods like PPO~\cite{schulman2017proximal} have been widely adopted, but require multiple rounds of on-policy optimization. To improve efficiency, alternatives such as Direct Preference Optimization (DPO)~\cite{rafailov2024directpreferenceoptimizationlanguage} bypass reward modeling by directly optimizing preference loss, but suffer from off-policy bias. Group Relative Policy Optimization (GRPO)~\cite{shao2024deepseekmath} offers a compromise by avoiding learned value networks and leveraging intra-group comparisons for relative rewards.
Building on these, recent RL methods specifically designed for multi-step reasoning and tool interaction include AEPO~\cite{dong2025agentic}, which addresses entropy imbalance and rollout collapse in multi-round agents; ARPO~\cite{dong2025agentic2}, which guides exploration of uncertain high-entropy tool calls using advantage attribution; and MEM1~\cite{zhou2025mem1}, which synergizes memory and reasoning to support long-horizon multi-step interactions. More recently, DeepSearch~\cite{wu2026deepsearch} addresses the exploration bottleneck
in RLVR by integrating Monte Carlo Tree Search into the training loop. Despite these advances, existing RL approaches still face key challenges: \textbf{coarse-grained action modeling, trajectory-level scalar rewards, limited exploration in more broad dimensions}.

\section{Parallel Shapley Construction}
In this section, we present Parallel Shapley, which consists of three key components: \ding{182} tag-guided parallel reasoning structure using \texttt{<Path>} and \texttt{<Summary>} blocks (Section~\ref{sec:pre}), \ding{183} Shapley value computation via Monte Carlo sampling to quantify path-level marginal contributions (Section~\ref{sec:shapley_based}), and \ding{184} outcome and path-level rewards two-level reward strategy for reinforcement training in Section~\ref{sec:rl_train}. More details of Proofs, Prompt are in Appendix~\ref{sec:outcome_reward_bias},\ref{apd:prompt}.

\begin{figure*}
    \centering
    \includegraphics[width=1\linewidth]{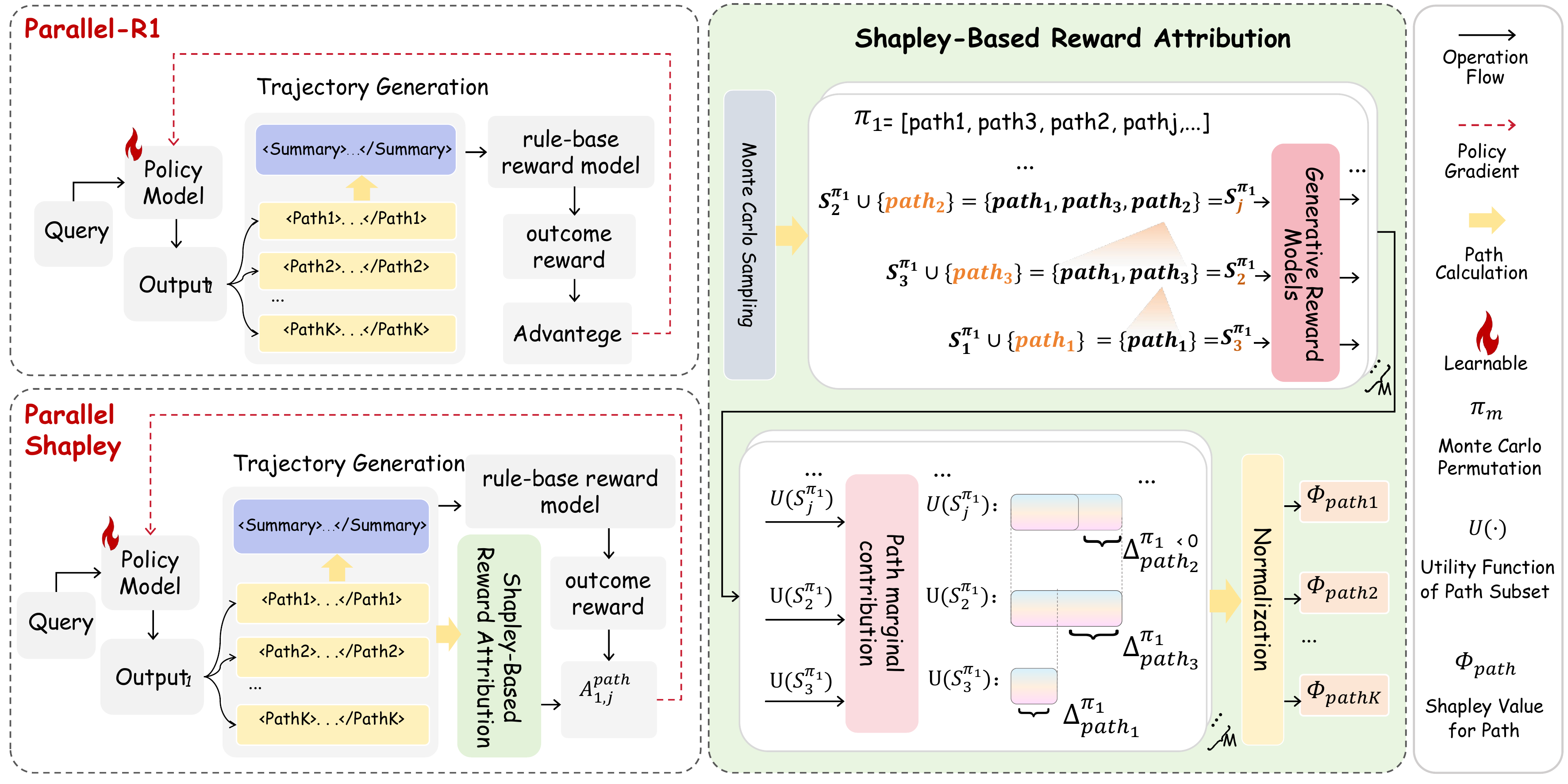}
    \vspace{0pt}
    \caption{\textbf{Overview of Parallel Shapley Pipeline.} \textbf{Left:} Comparison of training frameworks. Parallel-R1 assigns uniform outcome rewards to all paths, while Parallel Shapley computes individualized marginal contributions for each reasoning path. \textbf{Right:} The Shapley-based reward attribution mechanism. We sample $M$ path permutations via Monte Carlo, compute marginal contributions $\Delta_j^{\pi_m}$ under each ordering, and aggregate them to obtain $\phi_{\text{path}_j}^{\text{MC}}$ as the path-level reward for reinforcement learning.}
    \label{fig:method}
    \vspace{0pt} 
\end{figure*}

\subsection{Parallel Thinking Behavior Formulation}
\label{sec:pre}

Following the prior works~\cite{Yang2025MultiverseYL,Zheng2025ParallelR1TP}, we introduce these three control HTML-like tags:
\texttt{<Parallel>...</Parallel>},\texttt{<Path>...</Path>} and \texttt{<Summary>...</Summary>} to explicitly elicit parallel reasoning behaviors in large language models.
At the beginning of inference, the model performs standard autoregressive generation along a main reasoning chain.
\ding{182} Once the \texttt{<Parallel>} tag is generated, the model temporarily suspends the main chain and simultaneously expands multiple reasoning paths, each enclosed within an independent \ding{183} \texttt{<Path>...</Path>} block.
\ding{184}  After all parallel paths are completed, the model aggregates their contents into a concise \texttt{<Summary>...</Summary>} block,
where intermediate conclusions from different perspectives are synthesized to produce the final answer.

\subsection{Shapley-Based Reward Attribution}
\label{sec:shapley_based}
Under the parallel thinking formulation, the model generates $K$ reasoning trajectories within the blocks
\texttt{<Path>...</Path>}.
We denote the set of all reasoning paths for a given sample as
\(
\mathcal{P} \coloneqq \{\text{path}_1, \text{path}_2, \cdots,\) \(\text{path}_K\},
\) with $\text{path}_j$ denoting each path entry.

\paragraph{Marginal Contribution Calculation.}
Let $\mathcal{P}=\{\text{path}_1,\dots,$
$\text{path}_K\}$ denote the set of reasoning paths generated in the parallel thinking stage.
To quantify the contribution of each individual path to the final outcome, we define the score of a path using the \emph{Shapley value}, which measures its expected marginal contribution across all possible subsets of other paths. Formally, let $\mathbf{U} : 2^{\mathcal{P}} \rightarrow \mathbb{R}$ be a utility function defined over subsets of reasoning paths.
For any path $\text{path}_j \in \mathcal{P}$, its Shapley value  $\phi_{\text{path}_j}$ is:
\begin{equation*}\label{eq:shapley_definition}
\begin{aligned}
\phi_{\text{path}_j}
:=
\frac{1}{K}
\sum_{k=1}^{K}
\binom{K-1}{k-1}^{-1}
\sum_{\substack{S \subseteq \mathcal{P}\setminus{\text{path}_j} \\ |S| = k-1}}
\Delta_j^{S},
\end{aligned}
\end{equation*}
where  the utility function $\mathbf{U}(\cdot)$ evaluates the quality of a \emph{set of reasoning paths} when they jointly contribute to the final decision. By adding $\text{path}_j$ to $S$, the inner difference $\Delta_j^{S} = \mathbf{U}(S \cup \{\text{path}_j\})-\mathbf{U}(S)$ represents the marginal utility of $\text{path}_j$, and the averaging ensures fairness across all subset sizes.
However, due to complex dependencies and interactions among reasoning paths, $\mathbf{U}(\cdot)$ cannot be reliably estimated using verified reward signals such as RLVR~\cite{wang2025beyond}.
Therefore, we instantiate $\mathbf{U}(\cdot)$ using \emph{Generative Reward Model} (GRM)~\cite{mahan2024generativerewardmodels} to provide reward signals for generated outputs. Thus, all paths within a given subset can be holistically evaluated as:
\[
\mathbf{U}(S) \;=\; \mathrm{GRM}(S) \subset [0,5], \qquad S \subseteq \mathcal{P}.
\]
The GRM is prompted to produce scores in $[0,5]$ considering five dimensions and the detailed prompting template for GRM-based subset evaluation is provided in Appendix~\ref{apd:prompt}.

\paragraph{Monte Carlo Approximation.}
\label{sec:mc_shapley}

Exact computation of Shapley values requires evaluating the utility function $\mathbf{U}(\cdot)$ for all $2^K$ possible subsets of paths, which becomes computationally prohibitive as $K$ grows. To address this challenge, we adopt a Monte Carlo approximation based on the permutation formulation of Shapley values. The key insight is that instead of explicitly enumerating all subsets, we can estimate the expected marginal contribution by sampling random orderings of paths: for each ordering, a path's contribution is measured by comparing the utility before and after its insertion at its designated position.

Formally, let $\Pi$ denote the set of all permutations of the path set $\mathcal{P}$. We uniformly sample $M$ permutations $\{\pi_m\}_{m=1}^{M}$ from $\Pi$, where each permutation represents a hypothetical order in which paths are added to an initially empty coalition. For a given permutation $\pi_m = (\text{path}_{\pi_m(1)}, \text{path}_{\pi_m(2)}, \dots, \text{path}_{\pi_m(K)})$, let $S_j^{\pi_m}$ denote the set of paths appearing before $\text{path}_j$ in this ordering. The marginal contribution of $\text{path}_j$ under permutation $\pi_m$ is then computed as the difference in utility when $\text{path}_j$ joins the coalition $S_j^{\pi_m}$:
\[
\Delta_{j}^{\pi_m}
=
\mathbf{U}(S_j^{\pi_m} \cup \{\text{path}_j\})
-
\mathbf{U}(S_j^{\pi_m}).
\]
By averaging these marginal contributions across all sampled permutations, we obtain the Monte Carlo estimate of the Shapley value:
\[
\phi_{\text{path}_j}^{\mathrm{MC}}
=
\frac{1}{M}\sum_{m=1}^M \Delta_{j}^{\pi_m}.
\]
This approximation converges to the true Shapley value as $M \to \infty$, with the estimation error decreasing at rate $O(1/\sqrt{M})$. The resulting estimate $\phi_{\text{path}_j}^{\mathrm{MC}}$ serves as the path-level reward in our reinforcement learning framework Section~\ref{sec:rl_train}.

\subsection{Reinforcement Training}
\label{sec:rl_train}
After formulating the tag-guided parallel reasoning behavior through supervised fine-tuning following ~\cite{Yang2025MultiverseYL,Zheng2025ParallelR1TP}, we further optimize it via reinforcement learning. Our RL framework is on-policy and derived from PPO~\cite{schulman2017proximal}, following GRPO~\cite{shao2024deepseekmath}. 

Specifically, consider a rollout group consisting of $G_s$ trajectories
$\{y^{(p)}\}_{p=1}^{G_s}$ sampled from the old policy $\pi_{\theta_{\mathrm{old}}}$,
where each trajectory is a sequence of generated tokens.
Let $\mathcal{R}_G$ denote the set of all token-level rewards
$\{ r^{(p)}_i\}$
across the group.
For each token $x^{(p)}_i$ in trajectory $y^{(p)}$, we compute a normalized group-relative advantage as:
\begin{equation}
\hat{A}^{(p)}_i
=
\Bigl({
r^{(p)}_i - \texttt{mean}(\mathcal{R}_G) \Bigl) /
}{
\texttt{std}(\mathcal{R}_G)
}. \notag
\end{equation}
The GRPO optimization objective is defined as:
\begin{equation}
\footnotesize
\begin{aligned}
\mathcal{J}(\theta)
=
\mathbb{E}\Bigg[
\frac{1}{G_s} \sum_{p=1}^{G_s}
\frac{1}{|y^{(p)}|}
\sum_{i=1}^{|y^{(p)}|}
\texttt{Clip}\!\left(\tilde{z}_{i}^{(p)}, \hat{A}^{(p)}_i \right)
\Bigg]
-
\beta \mathbb{D}_{\mathrm{KL}}, 
\\
\texttt{Clip}\left(\tilde{z}_{i}^{(p)}, \hat{A}^{(p)}_i \right)
=
\min \Big(
\tilde{z}_{i}^{(p)} \hat{A}^{(p)}_i,
\operatorname{clip}(\tilde{z}_{i}^{(p)}, 1\pm\varepsilon)\hat{A}^{(p)}_i
\Big),
\notag
\end{aligned}
\end{equation} 
here importance ratio \(
\tilde{z}_{i}^{(p)}=\frac{
\pi_\theta(x^{(p)}_i \mid q, x^{(p)}_{<i})
}{
\pi_{\theta_{\mathrm{old}}}(x^{(p)}_i \mid q, x^{(p)}_{<i})
}\) denotes the probability ratio at the token level.
Term $\mathbb{D}_{\mathrm{KL}}(\pi_\theta \| \pi_{\mathrm{ref}})$
constrains the updated policy to remain close to a frozen reference policy
$\pi_{\mathrm{ref}}$.
Notably, all rewards, advantages, and policy updates in our framework
are defined at action level and applied at \emph{token level} with two aspects:
\begin{itemize}[leftmargin=*]
    \item \textbf{Outcome-level reward:} evaluates the quality of the final answer produced by output $i$. We extract the final answer from output $i$ and compare it with the ground truth $a_{\text{gold}}$ via exact string matching after normalization (removing whitespace, lowercasing, standardizing notation), with $\pm 1\%$ tolerance for numerical answers:
    \begin{equation*}
    R^{\text{out}}_i = \text{Acc}(\text{answer}_i, a_{\text{gold}}) \in \{0, 1\}.
    \end{equation*}
    \item \textbf{Path-level reward:} Monte Carlo Shapley value $\phi_{\text{path}_j}^{\mathrm{MC}}$ (Sec.~\ref{sec:mc_shapley}) quantifies the marginal contribution of each reasoning path, providing fine-grained reward assignment to individual paths.
\end{itemize}

\paragraph{Reward Combination.}
For each rollout, we parse the response into $K$ paths and a final \texttt{<Summary>} block. After normalizing both reward signals, we construct the token-level reward tensor as:
\begin{equation*}
r_t = \lambda_o R^{\text{out}} \cdot \mathbf{1}[t = T] + \lambda_p \sum_{j=1}^{K} \tilde{\phi}^{\mathrm{MC}}_j \cdot \mathbf{1}[t = e_j],
\end{equation*}
where $T$ is the last valid token of the whole response, $e_j$ is the last token of the corresponding \texttt{<Path\_j>} span, and $\tilde{\phi}^{\mathrm{MC}}_j$ denotes the normalized Shapley reward. In other words, the outcome reward is assigned to the last valid token of the full response, while each path-level reward is assigned to the closing token of its \texttt{<Path>} span. This follows the standard last-token reward assignment convention in RL for LLMs~\cite{sheng2024hybridflow}: since both final-answer correctness and path-level marginal utility can only be judged after the corresponding sequence or span is complete, the scalar reward is injected at its last valid token. After the reward tensor is constructed, GRPO performs group-normalized advantage estimation and applies token-level policy-gradient updates through the autoregressive likelihood objective.


\section{Experiments}

\begin{table*}[htbp]
\centering
\begin{adjustbox}{max width=\textwidth}
\begin{tabular}{llcccccccc}
\toprule
\multirow{2}{*}{\textbf{Type}} & \multirow{2}{*}{\textbf{Method}} & \multicolumn{2}{c}{\textbf{AIME25}} & \multicolumn{2}{c}{\textbf{AIME24}} & \multicolumn{2}{c}{\textbf{AMC23}} & \textbf{MATH} & \textbf{Avg.} \\
\cmidrule(lr){3-4} \cmidrule(lr){5-6} \cmidrule(lr){7-8} \cmidrule(lr){9-9} 
 & & Mean@16 & Pass@16 & Mean@16 & Pass@16 & Mean@16 & Pass@16 & Mean@1 & \\
\midrule
\multirow{7}{*}{\textbf{Baselines}} 
 & Qwen3-4B-Base & 1.3 & 10.2 & 2.9 & 16.5 & 8.1 & 51.2 & 13.9 & 6.6 \\
 \cmidrule(lr){2-10}
 & \textit{SFT + Parallel} & & & & & & & & \\
 & Parallel-SFT-Unseen & 5.2 & 20.9 & 8.5 & 26.7 & 41.7 & 80.1 & 71.5 & 31.7 \\
 \cmidrule(lr){2-10}
 & \textit{RL Approach} & & & & & & & & \\
 & GRPO (DAPO) & 14.8 & 32.4 & 18.5 & 30.6 & 63.6 & 85.1 & \textbf{83.5} & 45.1 \\
 & \quad + RL on GSM8K & 13.3 & 26.3 & 18.8 & 34.9 & 66.4 & 82.2 & 82.6 & 45.3 \\
 & Parallel-R1-Unseen & \textbf{17.7} & \underline{37.8} & 18.3 & 33.2 & \underline{69.7} & \underline{88.9} & 82.6 & \underline{47.1} \\
 \hline
\rowcolor{avgblue}
\textbf{Ours} & Parallel Shapley & \underline{17.2} & \textbf{50.0}& \textbf{20.1}& \textbf{63.3}&\textbf{72.8}& \textbf{92.5}& \underline{83.1} & \textbf{48.3} \\
 \midrule
\multirow{6}{*}{\textbf{Ablation}}
 & Leave One Out & 15.9 & 35.4 & 18.8 & 34.1 & 67.7 & 86.5 & 82.4 & 46.2 \\
 & Path Independent Evaluation & 16.8 & 36.1 & \underline{19.1} & 33.7 & 68.4 & 87.6 & 81.5 & 46.5 \\
 \cmidrule(lr){2-10}
 & \textit{GRM Sensitivity} & & & & & & & & \\
 & \quad w/ Qwen2.5-3B-Instruct & 18.1 & 43.2 & 19.4 & 46.8 & 71.2 & 90.1 & 82.9 & 47.4 \\
 & \quad w/ Qwen3-4B-Instruct   & 18.4 & 45.7 & 19.7 & 52.4 & 71.8 & 91.0 & 83.0 & 47.9 \\
 & \quad w/ Qwen2.5-7B-Instruct  & 18.6 & 47.3 & 19.9 & 58.2 & 72.3 & 91.8 & 83.0 & 48.1 \\
 
\bottomrule
\end{tabular}%
\end{adjustbox}
\caption{Performance comparison on mathematical reasoning benchmarks for the Qwen-3-4B-Base model trained under different parallel thinking configurations. We report Mean@16 and Pass@16 for AIME25, AIME24, and AMC23, while MATH is evaluated with Mean@1. GRM sensitivity rows share the same Parallel Shapley training setup ($K=4$) with different evaluator choices.}
\label{tab:main_exp}
\end{table*}

\begin{table}[htbp]
\centering
\small
\setlength{\tabcolsep}{3.5pt}

\resizebox{\columnwidth}{!}{
    \begin{tabular}{lccccc} 
    \toprule
    \textbf{Counts ($K$)} & \textbf{AIME 25} & \textbf{AIME 24} & \textbf{AMC 23} & \textbf{MATH} & \textbf{Avg.} \\ \midrule
    $K=2$ & 16.0 & 18.1 & 67.2& 80.4 & 45.4\\
    $K=3$ & 16.4 & 17.0 & 67.8& 81.2 & 45.6\\
    $K=4$ & \textbf{17.2} & \textbf{20.1} & \textbf{72.8} & \textbf{83.1} & \textbf{48.3} \\
    $K=5$ & 15.2 & 18.9 & 69.4& 81.6 & 46.2\\ \bottomrule
    \end{tabular}
}
\caption{\textbf{Ablation Study on Path Counts ($K$)}. Performance comparison across different number of paths.}
\label{tab:ablation_k}
\end{table}

\subsection{Experimental Setups}
\label{sec:setup}

\paragraph{Setup.}
We use Qwen3-4B-Base~\cite{Yang2025Qwen3TR} as our backbone model. Following Parallel-R1~\cite{Zheng2025ParallelR1TP}, we perform a cold-start initialization via supervised fine-tuning on Parallel-GSM8K~\cite{Zheng2025ParallelR1TP}, resulting in 230 gradient update steps. For reinforcement learning, we train on the DAPO dataset for only 40 gradient update steps using the GRPO algorithm, with a batch size of 256, a learning rate of 1e-6, and 8 rollouts per sample, without warm-up or learning rate scheduling. The maximum prompt length and response length are set to 2,000 and 3,000 tokens, respectively. We use \texttt{qwen-plus} as our default GRM. All experiments are built upon the VERL codebase~\cite{sheng2024hybridflow}, with its default training recipe adopted throughout without additional hyperparameter tuning.

\paragraph{Evaluation.}
At test time, we evaluate on four mathematical reasoning benchmarks: AIME24, AIME25, AMC23, and MATH~\cite{Hendrycks2021MeasuringMP}. We generate one response per problem on MATH. For the remaining three benchmarks, we draw 16 i.i.d.\ samples per problem at the same decoding temperature and report Mean@16 accuracy to reduce sampling variance, consistent with prior work~\cite{Zheng2025ParallelR1TP,wang2025beyond}. We additionally report Pass@16, computed as the fraction of problems for which at least one of the 16 sampled responses is correct, to reflect the upper bound of model capability.

\subsection{Main Results Analysis}
\label{sec:main_result}

We conduct experiments on AIME25, AIME24, AMC23, and MATH as shown in Table~\ref{tab:main_exp}. Overall, SFT with parallel-format prompting brings a large gain over the base model, but still lags behind RL, indicating that parallel thinking structure is helpful but not sufficient and on-policy optimization is necessary for strong mathematical reasoning.
\paragraph{Overall comparison.} Parallel Shapley achieves the best overall results while being substantially more training-efficient: \textbf{with only 40 RL steps}, our model already matches or exceeds Parallel-R1-Unseen trained \textbf{with 200 RL steps} on most settings. In particular, we obtain a clear improvement on the hardest benchmark AIME25, improving Pass@16 from $37.8 \to 50.0$ \textbf{(+32.3\%}). We also improve AIME24 Mean@16 from $18.3 \to 20.1$ (\textbf{+9.8\%}) and Pass@16 from $33.2 \to 63.3$ (\textbf{+90.7\%}), and raise AMC23 Mean@16 from $69.7 \to 72.8$ (\textbf{+4.4\%}) and Pass@16 from $88.9 \to 92.5$ (\textbf{+4.0\%}). On MATH, we achieve $83.1$ vs.\ $82.6$ for Parallel-R1-Unseen. These results support that Shapley-based path-level reward assignment provides denser and less biased learning signals than outcome-only shared rewards, effectively discouraging misleading trajectories and enabling faster convergence.

\paragraph{Pass@16 gains and the Mean@16 trade-off.}
The gains on Pass@16 are particularly striking, with an average improvement of \textbf{42.3\%} across three benchmarks, reaching $\textbf{50.0}$ on AIME25 and $\textbf{63.3}$ on AIME24. By rewarding each path proportionally to its marginal contribution, the model is steered toward generating \textbf{\emph{diverse and complementary}} reasoning trajectories, substantially increasing the probability of covering correct solutions across multiple attempts. Meanwhile, Mean@16 improvements are more moderate and, on AIME25, slightly below Parallel-R1 ($17.2$ vs.\ $17.7$). This is a natural consequence of the same mechanism: Shapley attribution encourages path specialization (Section~\ref{sec:path_comp}), so individual rollouts explore different angles rather than converging on a single answer, which elevates the upper bound (Pass@16) while tempering expected accuracy (Mean@16). We discuss bridging this gap via on-policy self-distillation in Section~\ref{sec:conclusion}.

\subsection{Ablation Study}

To investigate the effectiveness of path-level control, we conducted ablation studies comparing the following two strategies:
\begin{itemize}[leftmargin=*]
    \item \textbf{Leave-One-Out (LOO):} Measures each path's contribution by removing it from the full path set and evaluating the utility difference:
    \begin{equation*}
    \phi_{\text{path}_j}^{\text{LOO}} = \text{GRM}(\mathcal{P}) - \text{GRM}(\mathcal{P} - \{\text{path}_j\}).
    \end{equation*}
    This approach captures marginal contributions at the full coalition but does not incorporate weighting across different coalition sizes.
    
    \item \textbf{Independent Path Evaluation:} Evaluates each path in isolation without considering the presence of other paths, computing $\phi_{\text{path}_j}^{\text{Indep}} = \text{GRM}(\{\text{path}_j\})$.
\end{itemize}

The empirical results are presented in Table~\ref{tab:main_exp}. Both baselines yield suboptimal performance compared to Parallel Shapley. Independent path evaluation achieves the lowest performance (Avg. $46.5$), as it evaluates paths in isolation without modeling how they complement or conflict with each other in the final reasoning chain. LOO performs slightly worse (Avg. $46.2$) since measuring contributions only relative to the full path set overlooks how path values vary across different subset compositions---a key factor that Shapley values capture through weighted averaging over all possible coalitions. Parallel Shapley (Avg. $48.3$) outperforms both baselines by a substantial margin, demonstrating that explicit modeling of coalition-based marginal contributions is essential for accurately assigning reward in multi-path reasoning.


\paragraph{GRM Sensitivity Analysis.}
In our framework, the GRM serves exclusively as a \textbf{reward signal provider}:
it scores subsets of already-generated reasoning paths to estimate coalition-level
utility for Shapley attribution, and does not participate in path generation, provide
reasoning traces, or directly supervise individual path content. To verify that the observed gains are not an artifact of using a proprietary evaluator,
we replace \texttt{qwen-plus} with progressively weaker open-source alternatives under
the same $K=4$, 40-step RL setting. Results are reported in the lower block of
Table~\ref{tab:main_exp}. Two findings emerge. First, even the weakest evaluator
(Qwen2.5-3B-Instruct) yields consistent improvements over Parallel-R1-Unseen across
all benchmarks, confirming that the method does not rely exclusively on a strong
proprietary model. Second, stronger evaluators provide better subset-utility estimates
and further improve performance, indicating that GRM quality and Shapley attribution
are complementary: better utility estimates lead to more accurate marginal contribution
scores, which in turn produce cleaner training signals. Finally, using the identical \texttt{qwen-plus} evaluator (46.5 vs. 48.3 on average), the contrast between Independent Path Evaluation and Parallel Shapley isolates the effect of coalition‑aware attribution, demonstrating that improvements arise from Shapley‑based path interaction modeling instead of evaluator strength alone.

\paragraph{Training Dynamics.}
Figure~\ref{fig:training_curve} compares the training curves of Parallel Shapley, Parallel-R1, and GRPO over 40 RL steps. Parallel Shapley improves faster and reaches a higher final score ($0.652$) compared with Parallel-R1 ($0.609$) and GRPO ($0.591$). Notably, Parallel Shapley achieves stable convergence after approximately 30 steps, whereas Parallel-R1 and GRPO continue to improve slowly without plateauing within the same budget. This suggests that Shapley-based path-level reward attribution provides denser and more discriminative training signals, enabling the policy to identify and reinforce high-contribution paths more efficiently than uniform outcome-level rewards.
\begin{figure}
    \centering
    \includegraphics[width=\linewidth]{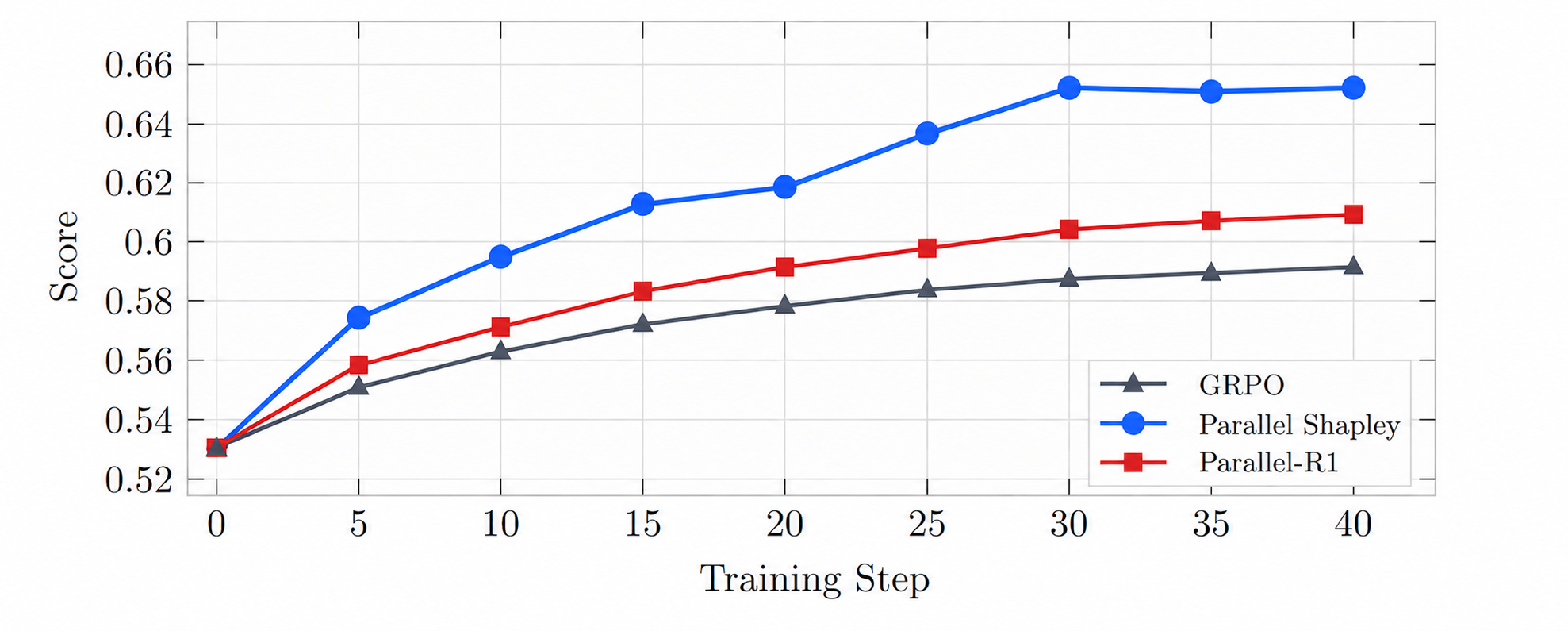}
    \caption{Training-curve comparison among Parallel Shapley, GRPO, and Parallel-R1 over 40 RL steps.}
    \label{fig:training_curve}
\end{figure}

\paragraph{Parameter Sensitivity Study.} We also conducted a parameter sensitivity study with different path counts  $K=2,3,4,5$. We found that the model performs best at $K=4$, achieving state-of-the-art results of $17.2$, $20.1$, and $72.8$ on AIME25, AIME24, and AMC23, respectively. Further increasing to $K=5$ degrades performance uniformly (Avg. $46.2$), indicating that excessive parallel paths introduce redundancy and computational overhead without proportional benefit. This suggests an optimal trade-off at $K=4$. The observed sensitivity underscores the importance of carefully tuning $K$ to balance exploration depth against diminishing returns, as overly large path sets can dilute the signal-to-noise ratio in Shapley value estimation.

\subsection{Path Complementarity Analysis}
\label{sec:path_comp}

To verify whether Parallel Shapley encourages complementary reasoning across paths, we conduct a Path Complementarity Analysis($K=4$): if paths are \emph{\textbf{truly complementary}}, each carries non-redundant information not
covered elsewhere, and masking any path should cause a sharp accuracy drop; if paths are \emph{\textbf{redundant}}, surviving paths suffice to reconstruct the answer and degradation should be mild. 
We therefore randomly mask a fraction of generated paths before the \texttt{<Summary>} stage and measure the resulting accuracy
loss as a diagnostic signal.
Results are shown in Figure~\ref{fig:dropout_path}.

\begin{figure}[t]
    \centering
    \includegraphics[width=\linewidth]{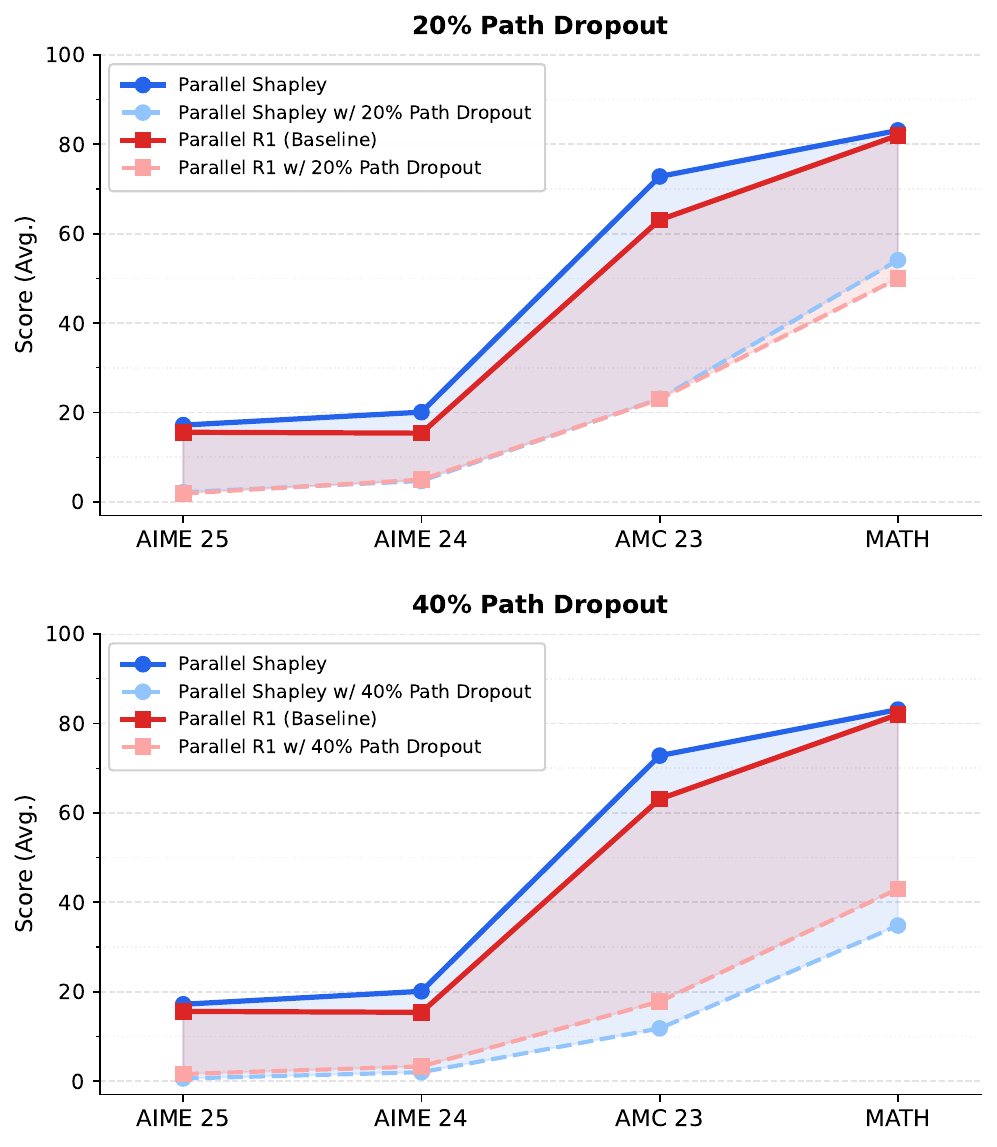}
    \caption{
        \textbf{Path dependency analysis ($K=4$).}
        Solid lines denote full-path baselines; dashed lines denote
        the corresponding masking conditions.
    }
    \label{fig:dropout_path}
\end{figure}

Across all benchmarks, both methods naturally degrade under 
path masking, yet Parallel Shapley degrades \emph{more 
sharply}---consistent with higher path complementarity.
Under 20\% masking, Parallel Shapley loses an average of \textbf{66.8\%} of its original score 
(e.g., AIME~25: $17.2 \to 2.2$, AIME~24: $20.1 \to 4.7$), 
whereas Parallel-R1 drops by \textbf{64.5\%} 
(AIME~25: $15.6 \to 1.9$, AIME~24: $15.4 \to 5.0$).
The gap widens considerably at 40\% masking: 
Parallel Shapley degrades by \textbf{82.1\%} on average, 
compared with \textbf{71.9\%} for Parallel-R1---a difference of 
10.2 percentage points.

This \textbf{differential degradation} directly reflects the two methods' structural
difference. Parallel Shapley rewards each path by its \textbf{marginal contribution},
steering the model toward a clear division of labor where each path
specializes in a distinct reasoning role---algebraic manipulation, case enumeration,
or boundary verification. Parallel-R1's uniform rewards, by contrast, produce
redundant paths that rehearse the same logic, so surviving paths still
suffice to reconstruct the answer. These results confirm that Parallel Shapley
successfully induces \textbf{complementary, non-overlapping reasoning} across paths.


\section{Conclusion and Future Work}
\label{sec:conclusion}
We propose \textbf{Parallel Shapley}, a reinforcement learning 
framework that formalizes each reasoning path as a player in a 
cooperative game and leverages \emph{Shapley values} with Monte 
Carlo approximation and a \emph{Generative Reward Model} to 
quantify path-level marginal contributions. Experiments on four 
mathematical reasoning benchmarks show consistent improvements 
over strong baselines with fewer training steps, yielding an 
average \textbf{42.3\%} Pass@16 improvement while maintaining 
competitive Mean@16. Preliminary multi-hop QA results 
(Appendix~\ref{apd:multihop}) further suggest generalizability 
beyond mathematical reasoning.

As discussed in Section~\ref{sec:main_result}, Shapley-based 
attribution encourages path specialization, elevating Pass@16 
while introducing a trade-off with Mean@16. To bridge this gap, 
we plan to explore on-policy 
self-distillation~\cite{wu2025native}, using the diverse 
multi-path trajectories as distillation targets to internalize 
synthesis behavior into single-generation performance. We also 
intend to extend Parallel Shapley to code generation, where 
parallel paths can naturally decompose into sub-modules and 
alternative implementations.

\section*{Limitations}
While Parallel Shapley demonstrates consistent improvements in path-level reward
assignment for parallel reasoning, several limitations remain. First, due to
computational constraints, our experiments are conducted on Qwen3-4B-Base, and
the method has not yet been evaluated on larger-scale models; we plan to extend
validation to larger backbones in future work. Second, although Monte Carlo
sampling reduces the combinatorial cost of exact Shapley computation, the number
of GRM queries still grows with the number of parallel paths $K$, which may
become a bottleneck at larger scales; we plan to explore more sample-efficient
approximation strategies such as stratified sampling to address this. Third,
the gap between Pass@16 and Mean@16 indicates that while Parallel Shapley
effectively improves the model's upper-bound reasoning capability, consistently
converting this into single-generation accuracy remains an open challenge; we
plan to investigate on-policy self-distillation as
a principled mechanism to bridge this gap in future work.
\bibliography{acl_latex}

\appendix

\newpage
\section{Outcome-Level Reward Misallocation in Parallel Reasoning}
\label{sec:outcome_reward_bias}

\begin{theorem}[Gradient Bias under Outcome-Only Rewards]
\label{thm:outcome_reward_bias}
Consider a parallel reasoning framework where $K$ reasoning paths 
$\mathcal{P} = \{\mathrm{path}_1, \ldots, \mathrm{path}_K\}$ 
are generated and aggregated into a summary $s$ that produces the final answer. 
Let $\color{blue}r_{\mathrm{out}}$ denote the outcome-level reward assigned based on answer correctness, 
and let $\color{orange}\pi_{\boldsymbol{\theta}}(\mathrm{path}_j \mid q_0)$ represent the policy generating path $j$ 
conditioned on query $q_0$. 
Under the standard policy gradient framework with uniform reward distribution, 
each path receives an equal share $\color{blue}\tilde{r}_j = \frac{r_{\mathrm{out}}}{K}$ regardless of its 
actual contribution. For a path $\mathrm{path}_A$ that provides \emph{misleading} or \emph{redundant} 
information (i.e., with negative true marginal contribution $\color{orange}\phi_A < 0$), 
the policy update still satisfies:
\begin{equation}\label{eq:outcome_bias_main}
\boldsymbol{\theta}_{t+1} 
= \boldsymbol{\theta}_t 
+ \mu \cdot \color{blue}\frac{r_{\mathrm{out}}}{K} \color{black}\cdot 
\nabla_{\boldsymbol{\theta}} \log \color{orange}\pi_{\boldsymbol{\theta}}(\mathrm{path}_A \mid q_0)\color{black},
\end{equation}
where $\mu > 0$ is the learning rate. This leads to a \textbf{probability increase} 
for path $\mathrm{path}_A$ proportional to the product of its own gradient magnitude 
and its alignment with other paths:
\begin{equation}\label{eq:prob_increase}
\begin{aligned}
& \frac{\color{orange}\pi_{\boldsymbol{\theta}_{t+1}}(\mathrm{path}_A) \color{black}- \color{orange}\pi_{\boldsymbol{\theta}_t}(\mathrm{path}_A)}{\color{orange}\pi_{\boldsymbol{\theta}_t}(\mathrm{path}_A)} \\
& \simeq 
\mu \cdot \color{blue}\frac{r_{\mathrm{out}}}{K} \color{black}\cdot
\big\|\nabla_{\boldsymbol{\theta}} \log \color{orange}\pi_{\boldsymbol{\theta}}(\mathrm{path}_A)\color{black}\big\|_2 \\
&\quad \cdot \sum_{j=1}^{K} 
\rho(\mathrm{path}_A, \mathrm{path}_j) \cdot
\big\|\nabla_{\boldsymbol{\theta}} \log \color{orange}\pi_{\boldsymbol{\theta}}(\mathrm{path}_j)\color{black}\big\|_2,
\end{aligned}
\end{equation}
For convenience, we denote the policy gradient for path $i$ as:$\color{orange}\mathbf{g}_i^{\boldsymbol{\theta}} \color{black}= \nabla_{\boldsymbol{\theta}} \log \color{orange}{\pi_{\boldsymbol{\theta}}(\mathrm{path}_i)}$, then the gradient similarity between two paths is measured by cosine similarity:

\begin{equation}\label{eq:gradient_similarity}
\rho(\mathrm{path}_i, \mathrm{path}_j) 
= \frac{\langle\color{orange}\mathbf{g}_i^{\boldsymbol{\theta}} \color{black}, \color{orange}\mathbf{g}_j^{\boldsymbol{\theta}} \color{black} \rangle}
{\|\color{orange}\mathbf{g}_i^{\boldsymbol{\theta}} \color{black}\|_2 \|\color{orange}\mathbf{g}_j^{\boldsymbol{\theta}} \color{black}\|_2}.
\end{equation}

\noindent Consequently, when $\color{blue}r_{\mathrm{out}} > 0$ (correct final answer), 
\emph{all paths receive positive gradient updates} even if some paths 
(like $\mathrm{path}_A$) contributed negatively or were overridden during summarization. 
This constitutes a form of \textbf{reward hacking}, where misleading paths 
``free-ride'' on the success of complementary paths.
\end{theorem}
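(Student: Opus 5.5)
The plan is to treat the statement as a first-order (linearized) consequence of one REINFORCE step under the uniform outcome reward. The argument then splits into three pieces: the update rule \eqref{eq:outcome_bias_main}, the relative probability change \eqref{eq:prob_increase}, and the sign conclusion.

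\textbf{The update rule.} Write the surrogate objective of the parallel rollout as $J(\boldsymbol{\theta})=\sum_{j=1}^K \tilde r_j \log \pi_{\boldsymbol{\theta}}(\mathrm{path}_j\mid q_0)$ with $\tilde r_j=r_{\mathrm{out}}/K$. This is exactly what the outcome-only scheme induces once the shared scalar reward is split uniformly across the $K$ path spans, with no dependence on $\phi_j$. A gradient ascent step with learning rate $\mu$ gives
\[
\boldsymbol{\theta}_{t+1}=\boldsymbol{\theta}_t+\mu\frac{r_{\mathrm{out}}}{K}\sum_{j=1}^K \mathbf{g}_j^{\boldsymbol{\theta}} .
\]
The component of this update attributable to $\mathrm{path}_A$'s own log-likelihood term is precisely \eqref{eq:outcome_bias_main}. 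The key observation is that the coefficient $r_{\mathrm{out}}/K$ contains no information about $\phi_A$, so the sign of the step along $\mathbf{g}_A^{\boldsymbol{\theta}}$ is the sign of $r_{\mathrm{out}}$.

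\textbf{The relative probability change.} Take a first-order Taylor expansion of $\log\pi_{\boldsymbol{\theta}}(\mathrm{path}_A)$ around $\boldsymbol{\theta}_t$:
\[
\log\pi_{\boldsymbol{\theta}_{t+1}}(\mathrm{path}_A)-\log\pi_{\boldsymbol{\theta}_t}(\mathrm{path}_A)\simeq \langle \mathbf{g}_A^{\boldsymbol{\theta}},\,\boldsymbol{\theta}_{t+1}-\boldsymbol{\theta}_t\rangle .
\]
Using $\log(1+x)\simeq x$ for small $x$, the left side equals the relative change $(\pi_{t+1}-\pi_t)/\pi_t$ up to second-order terms. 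Substituting the full update gives $\mu\frac{r_{\mathrm{out}}}{K}\sum_j\langle\mathbf{g}_A^{\boldsymbol{\theta}},\mathbf{g}_j^{\boldsymbol{\theta}}\rangle$. Rewriting each inner product as $\rho(\mathrm{path}_A,\mathrm{path}_j)\,\|\mathbf{g}_A^{\boldsymbol{\theta}}\|_2\|\mathbf{g}_j^{\boldsymbol{\theta}}\|_2$ via \eqref{eq:gradient_similarity}, and factoring out $\|\mathbf{g}_A^{\boldsymbol{\theta}}\|_2$, yields \eqref{eq:prob_increase}. The $j=A$ term contributes $\|\mathbf{g}_A^{\boldsymbol{\theta}}\|_2^2$, since $\rho=1$ there.

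\textbf{The main obstacle: the sign conclusion.} For $r_{\mathrm{out}}>0$ the self term $\mu\frac{r_{\mathrm{out}}}{K}\|\mathbf{g}_A^{\boldsymbol{\theta}}\|_2^2$ is strictly positive. However, the cross terms involve $\rho(\mathrm{path}_A,\mathrm{path}_j)$, which can be negative. I would handle this in two steps.
\begin{enumerate}
\item Make the claim precise about the update direction. The step \eqref{eq:outcome_bias_main} always has positive inner product with $\mathbf{g}_A^{\boldsymbol{\theta}}$, so $\mathrm{path}_A$ is reinforced by its own term regardless of $\phi_A<0$. This part holds unconditionally.
\item Show that the total relative change is positive under a mild condition, namely $\sum_{j\neq A}\rho(\mathrm{path}_A,\mathrm{path}_j)\|\mathbf{g}_j^{\boldsymbol{\theta}}\|_2>-\|\mathbf{g}_A^{\boldsymbol{\theta}}\|_2$. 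This holds, for example, whenever the paths share a prefix and format tokens, so that pairwise correlations are nonnegative. It also holds in the near-orthogonal regime typical of high-dimensional gradients.
\end{enumerate}
I would state this condition explicitly and also note that the $\simeq$ hides $O(\mu^2)$ terms. Finally, I would contrast this with the Shapley reward, where the coefficient on $\mathbf{g}_A^{\boldsymbol{\theta}}$ is $\tilde\phi_A<0$ and the sign of the self term reverses. This yields the free-riding interpretation.
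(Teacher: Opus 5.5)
Your proposal is correct and follows essentially the same route as the paper: a first-order Taylor expansion of one REINFORCE step whose coefficient $r_{\mathrm{out}}/K$ does not depend on $\phi_A$. You expand $\log\pi$, while the paper expands $\pi$ and applies the log-derivative trick, which is the same computation; the paper likewise first isolates the self term $\mu\frac{r_{\mathrm{out}}}{K}\|\mathbf{g}_A\|_2^2$ and then substitutes the aggregate update $\frac{r_{\mathrm{out}}}{K}\sum_j \mathbf{g}_j$, rewriting the inner products via cosine similarity. Your explicit condition $\sum_{j\neq A}\rho(\mathrm{path}_A,\mathrm{path}_j)\|\mathbf{g}_j\|_2 > -\|\mathbf{g}_A\|_2$ for the total first-order change to be positive sharpens the paper's informal appeal to surface-level alignment ($\rho>0$), since the paper's ``guaranteed'' increase is established only for the self term.
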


\begin{proof}
We analyze the policy gradient dynamics under outcome-level reward allocation 
through a multi-stage derivation. Under the REINFORCE algorithm with outcome reward $\color{blue}r_{\mathrm{out}}$, 
the expected gradient for the entire path set $\mathcal{P}$ is:
\begin{equation}\label{eq:reinforce_gradient}
\nabla_{\boldsymbol{\theta}} \mathbb{E}[R_{\mathrm{total}}]
= \mathbb{E}\left[
\color{blue}r_{\mathrm{out}} \color{black}\cdot 
\sum_{j=1}^{K} \nabla_{\boldsymbol{\theta}} \log \color{orange}\pi_{\boldsymbol{\theta}}(\mathrm{path}_j \mid q_0)\color{black}
\right].
\end{equation}
Expanding this expression by linearity of expectation and gradient operators:
\begin{equation}\label{eq:gradient_expansion}
\begin{aligned}
&\nabla_{\boldsymbol{\theta}} \mathbb{E}[R_{\mathrm{total}}] \\
&= \sum_{j=1}^{K} \mathbb{E}\left[
\color{blue}r_{\mathrm{out}} \color{black}\cdot 
\nabla_{\boldsymbol{\theta}} \log \color{orange}\pi_{\boldsymbol{\theta}}(\mathrm{path}_j \mid q_0)\color{black}
\right] \\[3pt]
&= \sum_{j=1}^{K} \mathbb{E}\Bigg[
\color{blue}\frac{r_{\mathrm{out}}}{K} \color{black}\cdot K \\
&\quad\quad \cdot \nabla_{\boldsymbol{\theta}} \log \color{orange}\pi_{\boldsymbol{\theta}}(\mathrm{path}_j \mid q_0)\color{black}
\Bigg] \\[3pt]
&= K \sum_{j=1}^{K} \mathbb{E}\left[
\color{blue}\tilde{r}_j \color{black}\cdot
\nabla_{\boldsymbol{\theta}} \log \color{orange}\pi_{\boldsymbol{\theta}}(\mathrm{path}_j \mid q_0)\color{black}
\right],
\end{aligned}
\end{equation}
where we introduce the uniform per-path reward $\color{blue}\tilde{r}_j = \frac{r_{\mathrm{out}}}{K}$. 
This formulation reveals the \emph{implicit assumption of equal contribution}: 
each path is treated as if it contributes $\frac{1}{K}$ of the total outcome, 
regardless of its actual semantic utility. However, in practice, paths exhibit 
\emph{heterogeneous marginal contributions} $\color{orange}\{\phi_j\}_{j=1}^{K}$ 
where $\sum_{j=1}^{K} \color{orange}\phi_j = \color{blue}r_{\mathrm{out}}$ 
but individual $\color{orange}\phi_j$ can be negative (misleading paths) 
or near-zero (redundant paths).

Consider a single misleading path $\mathrm{path}_A$ with true contribution 
$\color{orange}\phi_A < 0$. The parameter update from Eq.~\eqref{eq:outcome_bias_main} induces:
\begin{equation}\label{eq:param_perturbation}
\Delta \boldsymbol{\theta} 
= \boldsymbol{\theta}_{t+1} - \boldsymbol{\theta}_t 
= \mu \cdot \color{blue}\frac{r_{\mathrm{out}}}{K} \color{black}\cdot 
\nabla_{\boldsymbol{\theta}} \log \color{orange}\pi_{\boldsymbol{\theta}_t}(\mathrm{path}_A)\color{black}.
\end{equation}
Recall that $\color{orange}\mathbf{g}_A^{\theta_t} \color{black}= \nabla_{\boldsymbol{\theta}} \log \color{orange}\pi_{\boldsymbol{\theta}_t}(\mathrm{path}_A)$ 
denotes the policy gradient.
Under the parameter update $\Delta \boldsymbol{\theta} = \mu \cdot \color{blue}\frac{r_{\mathrm{out}}}{K} \color{black} \cdot \mathbf{g}_A^{\theta_t}$,  
we apply Taylor expansion to the policy probability around $\boldsymbol{\theta}_t$:

\begin{equation}\label{eq:taylor_step1}
\begin{aligned}
 \color{orange}\pi_{\boldsymbol{\theta}_{t+1}}(\mathrm{path}_A) 
&= \color{orange}\pi_{\boldsymbol{\theta}_t}(\mathrm{path}_A) \color{black}
+ \\[3pt]
& \mu \cdot \color{blue}\frac{r_{\mathrm{out}}}{K}  \color{black}\cdot 
\nabla_{\boldsymbol{\theta}} \color{orange}\pi_{\boldsymbol{\theta}_t}(\mathrm{path}_A) \color{black}\cdot \color{orange} \mathbf{g}_A^{\theta_t} \color{black} + \mathcal{O}(\mu^2).
\end{aligned}
\end{equation}

Now we apply the \emph{log-derivative trick}. Recall that for any probability distribution:
\begin{equation}\label{eq:log_derivative_identity}
\nabla_{\boldsymbol{\theta}} \color{orange}\pi_{\boldsymbol{\theta}}(\mathrm{path}_A) 
= \color{orange}\pi_{\boldsymbol{\theta}}(\mathrm{path}_A) \color{black}\cdot 
 \color{orange} \mathbf{g}_A^{\theta_t} \color{black}.
\end{equation}
Substituting Eq.~\eqref{eq:log_derivative_identity} into Eq.~\eqref{eq:taylor_step1}:
\begin{equation}\label{eq:taylor_step2}
\begin{aligned}
\color{orange}\pi_{\boldsymbol{\theta}_{t+1}}(\mathrm{path}_A) \color{black}= 
 &\color{orange}\pi_{\boldsymbol{\theta}_t}(\mathrm{path}_A) \color{black}\cdot \\&\left[
1 + \mu \cdot \color{blue}\frac{r_{\mathrm{out}}}{K} \color{black}\cdot 
\big\| \color{orange} \mathbf{g}_A^{\theta_t} \color{black}\big\|_2^2
\right]
+ \mathcal{O}(\mu^2),
\end{aligned}
\end{equation}
where the inner product $\langle \bm{g}, \bm{g} \rangle = \|\bm{g}\|_2^2$ 
for any gradient vector $\bm{g}$. This reveals the critical flaw: 
since $\color{blue}r_{\mathrm{out}} > 0$ for correct answers and gradient norms are always positive, 
the bracketed term $[1 + \mu \cdot \frac{r_{\mathrm{out}}}{K} \cdot \| \color{orange} \mathbf{g}_A^{\theta_t} \color{black}\|_2^2] > 1$, 
guaranteeing $\color{orange}\pi_{\boldsymbol{\theta}_{t+1}}(\mathrm{path}_A) > \pi_{\boldsymbol{\theta}_t}(\mathrm{path}_A)$ 
\emph{regardless of the path's true contribution} $\color{orange}\phi_A < 0$.

The previous analysis considered only self-interaction 
($\nabla \log \pi(\mathrm{path}_A)$ with itself). 
To understand how misleading paths exploit correlations with high-quality paths, 
we must analyze the \emph{full gradient update} across all $K$ paths. 
From Eq.~\eqref{eq:gradient_expansion}, the aggregate gradient is:
\begin{equation}\label{eq:aggregate_gradient}
\bm{g}_{\mathrm{total}} 
= \color{blue}\frac{r_{\mathrm{out}}}{K} \color{black}\sum_{j=1}^{K} 
\nabla_{\boldsymbol{\theta}} \log \color{orange}\pi_{\boldsymbol{\theta}}(\mathrm{path}_j)\color{black}.
\end{equation}
The probability change for $\mathrm{path}_A$ under this aggregate update becomes:
\begin{equation}\label{eq:cross_path_step1}
\begin{aligned}
&\color{orange}\pi_{\boldsymbol{\theta}_{t+1}}(\mathrm{path}_A) \\
&\simeq \color{orange}\pi_{\boldsymbol{\theta}_t}(\mathrm{path}_A) \color{black}
+ \color{orange}\pi_{\boldsymbol{\theta}_t}(\mathrm{path}_A) \color{black}\cdot 
\left\langle 
 \color{orange} \mathbf{g}_A^{\theta_t} \color{black},
\mu \cdot \bm{g}_{\mathrm{total}}
\right\rangle \\[3pt]
&= \color{orange}\pi_{\boldsymbol{\theta}_t}(\mathrm{path}_A) \color{black}
+ \mu \cdot \color{blue}\frac{r_{\mathrm{out}}}{K} \color{black}\cdot \color{orange}\pi_{\boldsymbol{\theta}_t}(\mathrm{path}_A) \color{black}\cdot \\
& \left\langle 
 \color{orange} \mathbf{g}_A^{\theta_t} \color{black},
\sum_{j=1}^{K}  \color{orange} \mathbf{g}_j \color{black}
\right\rangle.
\end{aligned}
\end{equation}
Expanding the inner product by linearity:
\begin{equation}\label{eq:cross_path_step2}
\begin{aligned}
\left\langle 
 \color{orange} \mathbf{g}_A^{\theta} \color{black},
\sum_{j=1}^{K} \color{orange} \mathbf{g}_j^{\theta} \color{black} \right\rangle 
&= \sum_{j=1}^{K} 
\left\langle 
\color{orange} \mathbf{g}_A^{\theta} \color{black},
\color{orange} \mathbf{g}_j^{\theta} \color{black}
\right\rangle \\
&= \sum_{j=1}^{K} 
\big\|\color{orange} \mathbf{g}_A^{\theta} \color{black}\big\|_2 \cdot
\big\|\color{orange} \mathbf{g}_j^{\theta} \color{black}\big\|_2 \cdot
\rho,
\end{aligned}
\end{equation}
where we introduced the cosine similarity $\rho$
from Eq.~\eqref{eq:gradient_similarity}. Combining Eq.~\eqref{eq:cross_path_step1} 
and Eq.~\eqref{eq:cross_path_step2}:
\begin{equation}\label{eq:cross_path_final}
\begin{aligned}
&\frac{\color{orange}\pi_{\boldsymbol{\theta}_{t+1}}(\mathrm{path}_A) \color{black}- 
\color{orange}\pi_{\boldsymbol{\theta}_t}(\mathrm{path}_A)}
{\color{orange}\pi_{\boldsymbol{\theta}_t}(\mathrm{path}_A)} \\[3pt]
&\simeq \mu \cdot \color{blue}\frac{r_{\mathrm{out}}}{K} \color{black}\cdot 
\big\| \color{orange} \mathbf{g}_A^{\theta} \color{black}\big\|_2 
\sum_{j=1}^{K} 
\rho(\mathrm{path}_A, \mathrm{path}_j) \cdot 
\big\| \color{orange} \mathbf{g}_j^{\theta} \color{black}\big\|_2.
\end{aligned}
\end{equation}
The probability increase of $\mathrm{path}_A$ is a weighted sum of its 
gradient similarities $\rho(\mathrm{path}_A, \mathrm{path}_j)$ with \emph{all other paths}. 
Critically, if $\mathrm{path}_A$ shares surface-level linguistic features 
(e.g., similar phrasing, common reasoning templates) with high-quality paths, 
their gradient vectors will align ($\rho > 0$) even if $\mathrm{path}_A$'s 
semantic contribution is negative. This creates a \emph{spurious correlation} 
where misleading paths ``free-ride'' on the success of complementary paths 
through gradient space proximity rather than semantic utility.
To formalize the long-term impact of reward misallocation, 
we introduce the \textit{\textbf{reward allocation error}}:
\begin{equation}\label{eq:reward_error}
\varepsilon_j 
= \color{blue}\tilde{r}_j \color{black}- \color{orange}\phi_j 
\color{black}= \color{blue}\frac{r_{\mathrm{out}}}{K} \color{black}- \color{orange}\phi_j\color{black},
\end{equation}
which measures the discrepancy between the uniform reward $\color{blue}\tilde{r}_j$ 
and the true marginal contribution $\color{orange}\phi_j$. 
For a misleading path with $\color{orange}\phi_A < 0$, 
we have $\varepsilon_A = \frac{r_{\mathrm{out}}}{K} - \phi_A > \frac{r_{\mathrm{out}}}{K} > 0$ 
when $\color{blue}r_{\mathrm{out}} > 0$. Over $T$ training iterations, 
the cumulative gradient bias accumulates as:
\begin{equation}\label{eq:cumulative_bias_step1}
\begin{aligned}
\bm{B}_A^{(T)} 
&= \sum_{t=1}^{T} \varepsilon_A \cdot 
\color{orange} \mathbf{g}_A^{\theta_t} \color{black} \\[3pt]
&= \sum_{t=1}^{T} 
\left(
\color{blue}\frac{r_{\mathrm{out}}}{K} \color{black}- \color{orange}\phi_A
\right)
\color{orange} \mathbf{g}_A^{\theta_t} \color{black} \\[3pt]
&= \color{blue}\frac{r_{\mathrm{out}}}{K} \color{black}\sum_{t=1}^{T} 
\color{orange} \mathbf{g}_A^{\theta_t} \color{black}
\color{black}- \color{orange}\phi_A \color{black}\sum_{t=1}^{T} 
\color{orange} \mathbf{g}_A^{\theta_t} \color{black}\color{black}.
\end{aligned}
\end{equation}
Taking expectations and assuming stationarity of gradient statistics:
\begin{equation}\label{eq:cumulative_bias_step2}
\begin{aligned}
\mathbb{E}[\|\bm{B}_A^{(T)}\|_2] 
&\simeq \left|
\color{blue}\frac{r_{\mathrm{out}}}{K} \color{black}- \color{orange}\phi_A\color{black}
\right| \cdot T \cdot 
\mathbb{E}\left[\big\|\color{orange} \mathbf{g}_A^{\theta} \color{black}\big\|_2\right] \\[3pt]
&= \varepsilon_A \cdot T \cdot 
\mathbb{E}\left[\big\|\color{orange} \mathbf{g}_A^{\theta} \color{black}\big\|_2\right],
\end{aligned}
\end{equation}
where we used the triangle inequality and the fact that gradient directions 
exhibit bounded variance under typical neural network parameterizations. 
This reveals that the bias \textit{\textbf{grows linearly with training duration}} $T$, 
progressively amplifying the generation probability of misleading paths. 
The rate of amplification is proportional to both the magnitude of misallocation 
$\varepsilon_A$ and the gradient norm, which measures the path's 
``learnability'' in parameter space.

The cumulative bias in Eq.~\eqref{eq:cumulative_bias_step2} induces 
\emph{emergent optimization pathologies}. To formalize this, 
we analyze the implicit objective that the policy actually optimizes 
under outcome-only rewards. Ideally, we want to maximize the total utility:
\begin{equation}\label{eq:ideal_objective}
\max_{\boldsymbol{\theta}} \; \mathbb{E}_{\mathcal{P} \sim \pi_{\boldsymbol{\theta}}}
\left[\color{blue}r_{\mathrm{out}}(\mathcal{P})\right]
= \max_{\boldsymbol{\theta}} \; \mathbb{E}_{\mathcal{P} \sim \pi_{\boldsymbol{\theta}}}
\left[U(\mathcal{P})\color{black}\right],
\end{equation}
where $U(\mathcal{P})$ represents the true utility function that depends on 
the \emph{collective semantic contribution} of all paths in $\mathcal{P}$. 
In an ideal scenario, this utility should decompose as:
\begin{equation}\label{eq:utility_decomposition}
U(\mathcal{P}) = \sum_{j=1}^{K} \color{orange}\phi_j(\mathcal{P}_{-j})\color{black},
\end{equation}
where $\color{orange}\phi_j(\mathcal{P}_{-j})$ denotes the context-dependent marginal contribution of path $j$ given the presence of all other paths $\mathcal{P}_{-j} = \mathcal{P} \setminus \{\mathrm{path}_j\}$. 
Note that $\color{orange}\phi_j$ can be negative when path $j$ introduces 
misleading information or contradicts high-quality paths.

However, the uniform reward mechanism in Eq.~\eqref{eq:gradient_expansion} 
does not have access to individual marginal contributions. Instead, it implicitly optimizes:
\begin{equation}\label{eq:surrogate_objective}
\begin{aligned}
&\max_{\boldsymbol{\theta}} \; \sum_{j=1}^{K} 
\mathbb{E}_{\mathrm{path}_j \sim \pi_{\boldsymbol{\theta}}}
\left[\color{blue}\frac{r_{\mathrm{out}}}{K}\color{black}\right]\\
&= \max_{\boldsymbol{\theta}} \; 
\frac{1}{K} \sum_{j=1}^{K}
\mathbb{E}_{\mathrm{path}_j \sim \pi_{\boldsymbol{\theta}}}
\left[\color{blue}r_{\mathrm{out}}(\mathcal{P})\color{black}\right] \\[3pt]
&= \max_{\boldsymbol{\theta}} \; 
\mathbb{E}_{\mathcal{P} \sim \pi_{\boldsymbol{\theta}}}
\left[\color{blue}r_{\mathrm{out}}(\mathcal{P}\color{black})\right] \cdot 
\underbrace{\frac{1}{K} \sum_{j=1}^{K} 1}_{\text{uniform weight}}.
\end{aligned}
\end{equation}
The critical issue is that Eq.~\eqref{eq:surrogate_objective} treats 
$\color{blue}r_{\mathrm{out}}(\mathcal{P})$ as if it were \emph{equally attributable} 
to each path, effectively assuming:
\begin{equation}\label{eq:false_assumption}
\color{orange}\phi_j(\mathcal{P}_{-j}) \color{black}\approx 
\color{blue}\frac{r_{\mathrm{out}}(\mathcal{P})}{K} 
\color{black}\quad \forall j \in \{1, \ldots, K\}.
\end{equation}
This assumption is fundamentally violated in parallel reasoning, where paths exhibit heterogeneous contributions: some provide critical insights 
($\color{orange}\phi_j \gg \frac{r_{\mathrm{out}}}{K}$), 
some are redundant ($\color{orange}\phi_j \approx 0$), 
and some are misleading ($\color{orange}\phi_j < 0$).

The discrepancy between the ideal objective (Eq.~\eqref{eq:ideal_objective}) 
and the surrogate objective (Eq.~\eqref{eq:surrogate_objective}) 
creates a structurally misaligned optimization landscape. 
To quantify this misalignment, define the \emph{objective gap}:
\begin{equation}\label{eq:objective_gap}
\begin{aligned}
\Delta_{\mathrm{gap}} 
&= \left|
\mathbb{E}_{\mathcal{P} \sim \pi_{\boldsymbol{\theta}}}
\left[\sum_{j=1}^{K} \color{orange}\phi_j\color{black}\right]
- K \cdot \mathbb{E}_{\mathcal{P} \sim \pi_{\boldsymbol{\theta}}}
\left[\color{blue}\frac{r_{\mathrm{out}}}{K}\color{black}\right]
\right| \\[3pt]
&= \left|
\sum_{j=1}^{K} \mathbb{E}_{\mathcal{P} \sim \pi_{\boldsymbol{\theta}}}
\left[\color{orange}\phi_j \color{black}- \color{blue}\frac{r_{\mathrm{out}}}{K}\right]
\right| \\[3pt]
&= \left|
\sum_{j=1}^{K} \mathbb{E}[\varepsilon_j]
\right|,
\end{aligned}
\end{equation}
where $\varepsilon_j$ is the reward allocation error from Eq.~\eqref{eq:reward_error}. 
While the errors may partially cancel due to the constraint 
$\sum_{j=1}^{K} \color{orange}\phi_j = \color{blue}r_{\mathrm{out}}$, 
the \emph{gradient-level impact} does not cancel because each path's 
update direction is determined by its individual $\varepsilon_j$, 
not the aggregate sum.

This misalignment manifests as three distinct emergent pathologies:
\noindent\ding{182}
From Eq.~\eqref{eq:cross_path_final}, the policy learns that 
probability increase is proportional to 
$\sum_{j=1}^{K} \rho(\mathrm{path}_i, \mathrm{path}_j) \cdot \|\nabla_{\boldsymbol{\theta}} \log \pi(\mathrm{path}_j)\|_2$. 
Since all paths receive positive rewards when $\color{blue}r_{\mathrm{out}} > 0$, 
the policy is incentivized to \emph{maximize gradient alignment} 
with the average path distribution rather than maximizing semantic utility. 
Mathematically, the policy implicitly optimizes:
\begin{equation}\label{eq:alignment_objective}
\begin{aligned}
&\max_{\boldsymbol{\theta}} \; 
\mathbb{E}_{\mathrm{path}_i \sim \pi_{\boldsymbol{\theta}}}
\left[
\sum_{j=1}^{K} \rho(\mathrm{path}_i, \mathrm{path}_j)
\right]\\
&\quad \quad \quad \text{subject to} \quad
\mathbb{E}[\color{blue}r_{\mathrm{out}}\color{black}] > 0,
\end{aligned}
\end{equation}
which favors paths that ``look similar'' to existing paths in gradient space, 
regardless of whether they add new information.
\noindent\ding{183} Paths that paraphrase or rephrase existing reasoning receive 
$\color{blue}\tilde{r}_j = \frac{r_{\mathrm{out}}}{K} > 0$ 
despite contributing negligible marginal utility ($\color{orange}\phi_j \approx 0$). 
Since redundant paths naturally exhibit high gradient similarity 
$\rho(\mathrm{path}_{\mathrm{redundant}}, \mathrm{path}_{\mathrm{original}}) \to 1$ 
due to shared linguistic structure, they receive \emph{amplified positive updates} 
via the cross-path term in Eq.~\eqref{eq:cross_path_final}. 
Over training, this creates a positive feedback loop where 
the policy increasingly generates near-duplicate paths, 
reducing the effective diversity of the path set $\mathcal{P}$.
\noindent\ding{184} Most critically, paths with logical errors or misleading reasoning 
($\color{orange}\phi_j < 0$) still receive positive gradient updates 
as long as their surface features align with correct paths. 
From Eq.~\eqref{eq:cross_path_final}, even if $\mathrm{path}_A$ 
contradicts the final answer, it benefits from 
$\sum_{j \neq A} \rho(\mathrm{path}_A, \mathrm{path}_j) \cdot \|\nabla \log \pi(\mathrm{path}_j)\|_2 > 0$ 
when its gradient direction happens to align with high-quality paths. 
This is particularly problematic in neural networks, where gradient alignment 
often reflects \emph{surface-level feature similarity} (e.g., shared n-grams, 
common reasoning templates) rather than semantic coherence. 
The policy thus learns to generate paths that ``sound correct'' 
without verifying their logical validity.

\vspace{0.3em}
\noindent These three pathologies collectively constitute \textit{\textbf{reward hacking}}: 
the policy exploits structural properties of the reward mechanism rather than 
optimizing the true objective. 

We can now formally characterize 
the fundamental limitation of outcome-only reward mechanisms: Given a set of reasoning paths $\mathcal{P} = \{\mathrm{path}_1, \ldots, \mathrm{path}_K\}$ 
that collectively produce outcome reward $\color{blue}r_{\mathrm{out}}$, 
the \emph{reward assignment problem} asks: how should we distribute 
$\color{blue}r_{\mathrm{out}}$ among individual paths to align policy optimization 
with true semantic utility?
Outcome-only mechanisms answer this question with uniform allocation 
$\color{blue}\tilde{r}_j = \frac{r_{\mathrm{out}}}{K}$, which we have shown leads to:
\ding{182} \textit{ \textbf{Systematic bias} (Eq.~\eqref{eq:reward_error})}: 
Paths receive rewards misaligned with their true contributions 
$\color{orange}\phi_j$, with error $\varepsilon_j$ that can exceed 
$\frac{r_{\mathrm{out}}}{K}$ for misleading paths.
\ding{183} \textit{ \textbf{Cumulative amplification} (Eq.~\eqref{eq:cumulative_bias_step2})}: 
The bias accumulates linearly over training ($O(T)$), 
progressively distorting the policy's learned distribution.
\ding{184} \textit{ \textbf{Gradient-space exploitation} (Eq.~\eqref{eq:cross_path_final})}: 
The policy learns to maximize gradient alignment rather than semantic utility, 
leading to redundancy proliferation and contradiction tolerance.
\ding{185} \textit{ \textbf{Optimization landscape misalignment} (Eq.~\eqref{eq:objective_gap})}: 
The surrogate objective diverges from the ideal objective, 
causing high variance and slow convergence.

\vspace{0.3em}
\noindent The root cause is the information bottleneck 
in outcome-level rewards: the scalar $\color{blue}r_{\mathrm{out}}$ 
is a lossy compression of the $K$-dimensional contribution vector 
$\color{orange}\{\phi_1, \ldots, \phi_K\}$. 
During summarization, the model may filter out low-quality paths, 
correct their errors, or override their conclusions---yet all paths 
receive the same reward signal, making it impossible for the policy 
to learn which paths were actually useful.
\end{proof}

\section{Evaluation Prompt for Path-subset in Generative Reward Model}
\label{apd:prompt}
As shown in Table~\ref{tab:prompt-evaluator}, the prompt guides an evaluator through systematic assessment of mathematical or reasoning solutions against ground truth answers.

The evaluation follows two steps: \ding{182} Comprehensively analyzing the solution path including methods, formulas, calculations, and errors; \ding{183} Scoring against the reference answer. The rubric uses a 0-5 point scale across five dimensions: \textit{\textbf{Method Reliability}} (reasonable approach), \textit{\textbf{Formula Completeness}} (mathematical expressions included), \textit{\textbf{Computational Accuracy}} (correct calculations), \textit{\textbf{Result Correctness}} (matches ground truth within tolerance), and \textit{\textbf{Information Purity}} (no misleading content).

Special rules handle edge cases like off-topic responses, alternative valid methods, coincidentally correct answers with errors, and incomplete solutions. This structured approach enables nuanced evaluation beyond simple right-or-wrong judgments, which is essential for training reward models that distinguish high-quality reasoning from flawed paths.
\begin{table}[htp]
\centering
\renewcommand{\arraystretch}{1} 
\small
\begin{tabularx}{\linewidth}{@{} X @{}}
\toprule
\textbf{Prompt} \\
\midrule
You are a professional academic evaluator responsible for assessing the quality of the solution-path summary for \text{\{query\}}.\\[0.5em]
\midrule 

\hlg{\textbf{[Evaluation Procedure]}}\\[0.5em]
\textbf{Step 1:} Provide a comprehensive summary of \{path\}
\begin{itemize}[leftmargin=*, itemsep=-0.3em, topsep=0.3em]
  \item Summarize the methods and ideas proposed in the path
  \item Organize the formulas and computational steps shown in the path
  \item Extract intermediate results and the final answer from the path
  \item Identify any issues, mistakes, or erroneous information in the path
\end{itemize}

\textbf{Step 2:} Score with reference to \{ground\_truth\}
\begin{itemize}[leftmargin=*, itemsep=0.1em, topsep=0.3em]
  \item Use the final result in ground\_truth as the evaluation benchmark
  \item Assign a score based on how close the path's content/result is to the correct result
\end{itemize}\\[0.5em]

\hlg{\textbf{[Scoring Rubric]}}\\[0.2em]
\begin{itemize}[leftmargin=*, itemsep=0.1em, topsep=0.3em]
  \item Score range: [0, 5]
  \item Scoring dimensions (add 1 point for each satisfied item):
\end{itemize}\\[0.3em]

\textbf{a. Method Reliability (1 point)}\\[0.1em]
The path proposes a reasonable solution approach or computational method (it does not need to be identical to ground\_truth, but must be logically consistent and theoretically feasible).\\[0.3em]

\textbf{b. Formula Completeness (1 point)}\\[0.1em]
The path provides mathematical formulas/expressions used for computation (each key step has a corresponding equation).\\[0.3em]

\textbf{c. Computational Accuracy (1 point)}\\[0.1em]
The step-by-step calculations and intermediate derivations in the path are correct, with no errors.\\[0.3em]

\textbf{d. Result Correctness (1 point)}\\[0.1em]
Compared to ground\_truth: the final answer obtained in the path matches the standard answer (a $\pm$1\% numerical error is allowed).\\[0.3em]

\textbf{e. Information Purity (1 point)}\\[0.1em]
The path contains no incorrect distracting information, logical contradictions, or misleading content.\\[0.5em]

\hlp{\textbf{[Special Rules]}}\\[0.2em]
\begin{enumerate}[leftmargin=*, itemsep=0.1em, topsep=0.3em]
  \item If the path does not address \{query\} at all, assign a score of 0.
  \item If the path uses a different method than ground\_truth, dimension A can still receive the point as long as the logic is sound and it can yield the correct result.
  \item If the path contains clear mistakes but the final result is coincidentally correct (e.g., errors cancel out), award dimension D but do not award dimension E.
  \item If the computation in the path is unfinished, dimension D is automatically 0.
\end{enumerate}\\[0.5em]

\hlp{\textbf{[Output Format]}}\\[0.2em]
Output only the total score: \texttt{x points}.\\[0.3em]
\bottomrule
\end{tabularx}
\caption{Prompt for path-subset evaluation in Generative Reward Model.}
\label{tab:prompt-evaluator}
\end{table}

\section{Ethical Considerations}
Our research focuses on enhancing the mathematical reasoning capabilities of LLMs through reinforcement learning. All experiments are conducted exclusively on publicly available datasets and benchmarks, including Parallel-GSM8K, DAPO, AIME, AMC, and MATH, in accordance with their respective licenses and usage terms. We ensure full transparency in our training procedures and evaluation protocols. This work does not involve any personally identifiable information, human subjects, or animal subjects, and poses no ethical risks related to data privacy or misuse. The sole intent of this research is to advance progress in LLM reasoning.

This work focuses on enhancing mathematical reasoning in LLMs and poses no direct ethical risks. The datasets used are publicly available and contain no personally identifiable or sensitive information. As a general consideration, improvements in LLM reasoning ability could potentially be misused in educational contexts, such as automated problem-solving where independent reasoning is expected. We encourage responsible deployment of systems built upon this work.

\section{Further Experiment Details}

\subsection{Experimental Configuration}
The RL hyperparameter settings are described in Section~\ref{sec:setup}. For the cold-start SFT stage, we train on Parallel-GSM8K with a batch size of 128, a learning rate of 1e-5, a weight decay of 0.01, and a warm-up step ratio of 0.1 under a cosine learning-rate schedule, resulting in 58/230 gradient update steps. For Monte Carlo sampling in Shapley value computation, we perform full enumeration for path counts $K$ = 2, and set the number of samples to 5 for $K=3,4,5$, balancing estimation fidelity and computational overhead. The reward combination weights are set to $\lambda_o = 0.5$ and $\lambda_p = 0.5$ throughout all experiments.

\subsection{Datasets Statistics}
We introduce the datasets used in our experiments, all of which are publicly available. 
\begin{itemize}[leftmargin=*]
    \item \textbf{Parallel-GSM8K}~\cite{Zheng2025ParallelR1TP} is a cold-start dataset constructed from the GSM8K training set via parallel-thinking prompting, consisting of approximately 7,473 grade school math problems annotated with parallel thinking trajectories. It is used for supervised fine-tuning (SFT) in the cold-start stage of parallel thinking training, and will be released upon publication.
    \item \textbf{DAPO}~\cite{yu2025dapo} is a large-scale mathematical reasoning dataset containing 17,000 curated prompts, each paired with a verifiable integer answer. The problems are sourced from competition-level mathematics benchmarks and are designed to support reinforcement learning training for LLM reasoning.
    \item \textbf{AIME}~\cite{aime} (American Invitational Mathematics Examination) is an olympiad-level mathematical reasoning benchmark. Each annual edition consists of 30 problems with integer answers in the range $[0, 999]$. This work evaluates on AIME 2024 and AIME 2025.
    \item \textbf{AMC}~\cite{amc} (American Mathematics Competition) is a competition-level mathematical reasoning benchmark. The AMC 2023 edition (AMC23) collects problems from the 2023 AMC 10 and AMC 12 examinations, covering a broad range of pre-calculus topics.
    \item \textbf{MATH}~\cite{hendrycks2021measuringmathematicalproblemsolving} is a collection of 12,500 challenging competition mathematics problems spanning seven subject areas, with difficulty levels ranging from 1 to 5. It is widely used for evaluating mathematical problem-solving ability across diverse topics and difficulty levels.
\end{itemize}

\section{Computational Resources and Software Environment}
All experiments were conducted on a single server equipped with four NVIDIA H100 GPUs (80\,GB memory per GPU) and 503\,GB of system RAM, running Ubuntu 22.04.5 LTS. The software environment is based on Python 3.10.0 and Conda 23.5.2. We implemented our experiments using PyTorch 2.6.0 and HuggingFace Transformers 4.51.3; unless otherwise specified, all default settings were used. For the generative reward model (GRM), we use \texttt{qwen-plus} to evaluate our path-subset values.

\section{Runtime Analysis}
\label{apd:runtime}
In the cold-start stage, we performed 5 hours of supervised fine-tuning with 230 gradient steps. In the post-training stage, we ran reinforcement learning for 40 gradient steps. For different path counts $K=2,3,4,5$, the training times are $24$ hours $\times 1$, $24$ hours $\times 2$, $24$ hours $\times 3$, and $24$ hours $\times 3$, respectively. The reported time includes both the necessary GPU computation and the GRM-based evaluation of each path subset.

\paragraph{Cost-Performance Analysis.}
Table~\ref{tab:cost_performance} compares per-step training time and average Pass@16 between Parallel-R1 and Parallel Shapley under the same $K=4$, 40-step RL setting.

\begin{table}[h]
\centering
\small
\setlength{\tabcolsep}{10pt}
\renewcommand{\arraystretch}{1.2}
\begin{tabular}{lcc}
\toprule
\textbf{Method} & \textbf{Time / Step} & \textbf{Avg.\ Pass@16} \\
\midrule
Parallel-R1      & 0.94 min & 53.3 \\
Parallel Shapley & 1.52 min & 68.6 \\
\bottomrule
\end{tabular}
\caption{Cost-performance comparison under the same $K=4$, 40-step RL setting. Parallel Shapley increases per-step wall-clock time by $\sim$62\% while improving Avg.\ Pass@16 by 15.3 points.}
\label{tab:cost_performance}
\end{table}

Parallel Shapley increases per-step wall-clock time from 0.94 min to 1.52 min, while improving Avg.\ Pass@16 from 53.3 to 68.6. Although Shapley estimation requires evaluating multiple path subsets per step, these subset-evaluation requests are issued \emph{concurrently} in our implementation, so the wall-clock overhead is closer to the latency of a batched evaluator call rather than the sum of all subset evaluations. Importantly, this overhead is \textbf{training-only}: at inference time, Parallel Shapley uses the trained policy directly and requires no GRM calls.

\section{The Use of Large Language Models}
In this study, Large Language Models (LLMs) were employed as auxiliary tools to assist with language refinement and programming-related support. Their use was limited to enhancing grammatical accuracy, readability, and overall writing style, as well as offering general coding suggestions or debugging guidance. All LLM-assisted outputs were thoroughly examined and validated by the authors prior to adoption. The conception of the research, methodological design, and experiment results analyses were entirely carried out by the authors. LLMs did not contribute to the formulation of research ideas or the derivation of conclusions.

\section{Broader Applicability: Multi-Hop Question Answering}
\label{apd:multihop}

To examine whether Parallel Shapley generalizes beyond mathematical reasoning, we conduct additional experiments on three multi-hop QA benchmarks: 2WikiMultiHopQA~\cite{ho2020constructing}, HotpotQA~\cite{yang2018hotpotqa}, and MuSiQue~\cite{trivedi2022musique}. These datasets require the model to gather, connect, and verify evidence across multiple reasoning hops, making them a natural testbed for path-level diversity and complementarity.

Since Parallel-R1 was not evaluated on multi-hop QA in its original work, we compare against Qwen3-4B-Base, GRPO, and Parallel-R1 using the same cold-start SFT initialization and RL training protocol as in the main experiments ($K=4$, 40 RL steps). Results are shown in Table~\ref{tab:multihop}.

\begin{table}[htp]
\centering
\small
\setlength{\tabcolsep}{5pt}
\renewcommand{\arraystretch}{1.2}
\begin{tabular}{lcccc}
\toprule
\textbf{Method} & \textbf{2Wiki} & \textbf{HotpotQA} & \textbf{MuSiQue} & \textbf{Avg.} \\
\midrule
Qwen3-4B-Base    & 28.9 & 30.3 & 12.2 & 23.8 \\
GRPO             & 31.2 & 32.5 & 13.7 & 25.8 \\
Parallel-R1      & 33.5 & 36.4 & 15.1 & 28.3 \\
Parallel Shapley & \textbf{35.6} & \textbf{38.2} & \textbf{16.5} & \textbf{30.1} \\
\bottomrule
\end{tabular}
\caption{Performance on multi-hop QA benchmarks. All methods use the same Qwen3-4B-Base backbone and training protocol as the main experiments.}
\label{tab:multihop}
\end{table}

Parallel Shapley consistently outperforms all baselines across the three datasets, improving the average score from 28.3 (Parallel-R1) to 30.1 (+6.4\%). These results suggest that Shapley-based path-level reward attribution is not limited to mathematical reasoning: multi-hop QA tasks similarly benefit from complementary path generation, where different paths gather and verify distinct pieces of evidence rather than repeating the same retrieval logic.

\section{More Case Studies}
\label{apd:more_case}

We present three case studies highlighting the complementary reasoning behavior induced by Shapley-based training.

Table~\ref{tab:case-juice-equalization} shows a juice-equalization problem from the main text, where two parallel paths naturally decompose the problem into complementary sub-tasks: one establishing the quantitative target, the other performing the symbolic reasoning. This division of labor mirrors how humans approach multi-step problems and is a direct consequence of Shapley-based credit assignment, which discourages redundant paths and incentivizes genuinely distinct reasoning perspectives.

In Table~\ref{tab:case-divisibility}, the single \texttt{<Parallel>} 
block contains two complementary paths: Path~1 provides the theoretical 
justification via the Remainder Theorem, while Path~2 instantiates it 
with the concrete substitution $n=-2$ to obtain the divisor $39$. 
Neither path alone constitutes a complete argument---the principle 
requires computation to be actionable, and the computation requires 
the principle to be valid.

Table~\ref{tab:case-parabola} shows a more complex example involving 
parabola rotation, where the model autonomously decomposes the problem 
into sub-questions, each handled by a dedicated \texttt{<Parallel>} 
block. In the second block, Path~1 establishes the trigonometric 
transformation framework but does not identify how to proceed, while 
Path~2 supplies the missing step by recognizing the key condition that 
resolves the problem. Together, the two paths form a complete reasoning 
chain that neither could provide independently.

\begin{table*}[ht]
\centering
\small
\setlength{\tabcolsep}{8pt}
\renewcommand{\arraystretch}{1.2}

\begin{tabularx}{\linewidth}{@{}
  >{\RaggedRight\arraybackslash}X
@{}}
\toprule
\textbf{Problem:} Mrs.\ Jones fills the first three glasses completely but runs out of juice when the fourth
glass is only $\frac{1}{3}$ full. What fraction of a glass must Mrs.\ Jones pour
from each of the first three glasses into the fourth glass so that all four glasses
will have the same amount of juice? $m$ and $n$ are relatively prime positive integers.
What is $m+n$?
\\
\midrule
\textbf{Ground Truth:} \textbf{7}
\\
\midrule
\textbf{Output:}
Let the capacity of each glass be $1$ unit. The first three glasses each contain
$1$ unit and the fourth contains $\frac{1}{3}$ unit. The total juice is
$1 + 1 + 1 + \frac{1}{3} = \frac{8}{3}$ units, so the equal share per glass is
$\frac{8}{3} \div 4 = \frac{2}{3}$ units.
\\
\hlg{\texttt{<Parallel>}} \\[-0.9em]
\begin{itemize}[leftmargin=*,topsep=0.1em, noitemsep]
  \item \texttt{<Path>} Let $x$ be the fraction poured from each of the first three glasses. After pouring, glass $1$ has $1-x$, glass $2$ has $1-x$, glass $3$ has $1-x$, and glass $4$ has $\frac{1}{3}+3x$. Setting them equal: $1-x=\frac{1}{3}+3x$, so $\frac{2}{3}=4x$, giving $x=\frac{1}{6}$. \texttt{</Path>}
  \item \texttt{<Path>} The total juice is $\frac{8}{3}$ units. Dividing equally among four glasses gives $\frac{2}{3}$ units per glass. \texttt{</Path>}
\end{itemize}
\\[-1.3em]
\hlg{\texttt{</Parallel>}} \\[-0.1em]
\hlp{\texttt{<Summary>}} The parallel block demonstrates complementary approaches: algebraic equation solving and total-and-divide. The consistent answer gives the fraction to pour as $\frac{1}{6}$. \hlp{\texttt{</Summary>}} \\
From the equation $1 - x = \frac{1}{3} + 3x$, solving yields $x = \frac{1}{6}$. \\
\hlg{\texttt{<Parallel>}} \\[-0.9em]
\begin{itemize}[leftmargin=*, noitemsep, topsep=0.1em]
  \item \texttt{<Path>} From $1 - x = \frac{1}{3} + 3x$, we get $\frac{2}{3} = 4x$, so $x = \frac{2}{3} \div 4 = \frac{1}{6}$. \texttt{</Path>}
  \item \texttt{<Path>} After pouring $\frac{1}{6}$ from each of the first three glasses, glass $1$: $1 - \frac{1}{6} = \frac{5}{6}$, glass $2$: $\frac{5}{6}$, glass $3$: $\frac{5}{6}$, glass $4$: $\frac{1}{3} + 3 \times \frac{1}{6} = \frac{5}{6}$. All four glasses contain $\frac{5}{6}$ units. \texttt{</Path>}
\end{itemize}
\\[-1.3em]
\hlg{\texttt{</Parallel>}} \\[-0.1em]
\hlp{\texttt{<Summary>}} Both approaches confirm that the fraction to pour from each glass is $\frac{1}{6}$. \hlp{\texttt{</Summary>}} \\
Thus $\frac{m}{n} = \frac{1}{6}$, so $m = 1$, $n = 6$, and $m + n = 7$. \\
Final Answer: \textbf{7}
\\
\bottomrule
\end{tabularx}
\caption{Juice-equalization problem demonstrating complementary path decomposition: one path sets up the algebraic equation, the other establishes the quantitative target.}
\label{tab:case-juice-equalization}
\end{table*}

\begin{table*}[htbp]
\centering
\small
\setlength{\tabcolsep}{8pt}
\renewcommand{\arraystretch}{1.2}

\begin{tabularx}{\linewidth}{@{}
  >{\RaggedRight\arraybackslash}X
@{}}
\toprule
\textbf{Problem:} Find the sum of all positive integers $n$ such that
$n + 2$ divides the product $3(n + 3)(n^2 + 9)$.
\\
\midrule
\textbf{Ground Truth:} \textbf{49}
\\
\midrule
\textbf{Output:}
We seek all positive integers $n$ such that
$n+2 \mid 3(n+3)(n^2+9)$, i.e., the division leaves no remainder.
\\
\hlg{\texttt{<Parallel>}} \\[-0.9em]
\begin{itemize}[leftmargin=*, topsep=0.1em, noitemsep]
  \item \texttt{<Path>} Use the Remainder Theorem: dividing by $n+2$ is equivalent to evaluating the polynomial at $n=-2$. \texttt{</Path>}
  \item \texttt{<Path>} Let $f(n)=3(n+3)(n^2+9)$. Then $f(-2)=3(1)(13)=39$, so $n+2$ must divide $39$. \texttt{</Path>}
\end{itemize}
\\[-1.3em]
\hlg{\texttt{</Parallel>}} \\[-0.1em]
\hlp{\texttt{<Summary>}} The divisibility condition reduces to requiring that $n+2$ is a positive divisor of $39$. \hlp{\texttt{</Summary>}} \\
The positive divisors of $39$ are $1,3,13,39$, yielding
$n=-1,1,11,37$. Discarding the negative value leaves $n=1,11,37$.
Each remaining value satisfies the original divisibility condition upon direct verification.
Summing the valid values gives $1+11+37=49$.
\\
Final Answer: \textbf{49}
\\
\bottomrule
\end{tabularx}
\caption{Divisibility problem where parallel paths contribute 
complementary roles: one supplying the theoretical principle 
(Remainder Theorem) and the other the concrete computation.}
\label{tab:case-divisibility}
\end{table*}

\begin{table*}[htbp]
\centering
\small
\setlength{\tabcolsep}{8pt}
\renewcommand{\arraystretch}{1.2}

\begin{tabularx}{\linewidth}{@{}
  >{\RaggedRight\arraybackslash}X
@{}}
\toprule
\textbf{Problem:} The parabola with equation $y = x^2 - 4$ is rotated $60^\circ$ counterclockwise
around the origin. The unique point in the fourth quadrant where the original parabola and its image
intersect has $y$-coordinate $\frac{a - \sqrt{b}}{c}$, where $a$, $b$, and $c$ are positive integers,
and $a$ and $c$ are relatively prime. Find $a + b + c$.
\\
\midrule
\textbf{Ground Truth:} \textbf{62}
\\
\midrule
\textbf{Output:}
The parabola $y = x^2 - 4$ is rotated $60^\circ$ counterclockwise around the origin.
To find the intersection in the fourth quadrant, I need to determine the equations of the
rotated parabola and the original one, then solve for their intersection.
\\
\hlg{\texttt{<Parallel>}} \\[-0.9em]
\begin{itemize}[leftmargin=*, topsep=0.1em, noitemsep]
  \item \texttt{<Path>} Geometric interpretation: the rotation transforms $(x,y)$ to
  $\bigl(x\cos60^\circ - y\sin60^\circ,\; x\sin60^\circ + y\cos60^\circ\bigr)$.
  Substituting $y = x^2-4$ and simplifying with $\sin60^\circ=\frac{\sqrt{3}}{2}$,
  $\cos60^\circ=\frac{1}{2}$ yields the rotated equation
  $Y = \frac{X^2}{2} + \frac{\sqrt{3}}{2}X - 2$. \texttt{</Path>}
  \item \texttt{<Path>} Rotation matrix approach: $(x',y')=\bigl(\frac{1}{2}x-\frac{\sqrt{3}}{2}y,\;
  \frac{\sqrt{3}}{2}x+\frac{1}{2}y\bigr)$. Substituting $y=x^2-4$ into these expressions
  gives the rotated parabola, whose intersection with the original can then be solved
  algebraically. \texttt{</Path>}
  \item \texttt{<Path>} Inverse substitution: express $x,y$ in terms of $x',y'$ via the
  inverse rotation, substitute into $y=x^2-4$, and solve the resulting system for
  the fourth-quadrant intersection. \texttt{</Path>}
  \item \texttt{<Path>} Direct coordinate substitution: since $\cos60^\circ=\frac{1}{2}$ and
  $\sin60^\circ=\frac{\sqrt{3}}{2}$, use $x'=\frac{1}{2}x-\frac{\sqrt{3}}{2}y$ and
  $y'=\frac{\sqrt{3}}{2}x+\frac{1}{2}y$, substitute $y=x^2-4$, and simplify to obtain
  the rotated parabola equation. \texttt{</Path>}
\end{itemize}
\\[-1.3em]
\hlg{\texttt{</Parallel>}} \\[-0.1em]
\hlp{\texttt{<Summary>}} The parallel block establishes the rotated parabola equation
$Y = \frac{X^2}{2} + \frac{\sqrt{3}}{2}X - 2$ via rotation transformation,
setting up the intersection problem with the original parabola. \hlp{\texttt{</Summary>}} \\
Setting the rotated equation equal to the original parabola:
$\frac{X^2}{2} + \frac{\sqrt{3}}{2}X - 2 = X^2 - 4$.
Multiplying both sides by $2$ gives $X^2 + \sqrt{3}X - 4 = 2X^2 - 8$,
which simplifies to $X^2 - \sqrt{3}X - 4 = 0$.
\\
\hlg{\texttt{<Parallel>}} \\[-0.9em]
\begin{itemize}[leftmargin=*, topsep=0.1em, noitemsep]
  \item \texttt{<Path>} Quadratic formula directly:
  $X = \frac{\sqrt{3} \pm \sqrt{3 + 16}}{2} = \frac{\sqrt{3} \pm \sqrt{19}}{2}$.
  For the fourth quadrant ($X>0$, $Y<0$), take $X = \frac{\sqrt{3}+\sqrt{19}}{2}$. \texttt{</Path>}
  \item \texttt{<Path>} Geometric selection: the fourth-quadrant point requires $Y < 0$;
  checking both roots confirms $X = \frac{\sqrt{3}-\sqrt{19}}{2}$ gives $X<0$,
  so the valid root is $X = \frac{\sqrt{3}+\sqrt{19}}{2}$. \texttt{</Path>}
  \item \texttt{<Path>} Discriminant check: $\Delta = 3 + 16 = 19 > 0$, confirming two real
  roots; the positive root $X = \frac{\sqrt{3}+\sqrt{19}}{2}$ corresponds to the
  fourth-quadrant intersection. \texttt{</Path>}
  \item \texttt{<Path>} Algebraic verification: substituting $X = \frac{\sqrt{3}+\sqrt{19}}{2}$
  back into $X^2 - \sqrt{3}X - 4 = 0$ confirms it satisfies the equation. \texttt{</Path>}
\end{itemize}
\\[-1.3em]
\hlg{\texttt{</Parallel>}} \\[-0.1em]
\hlp{\texttt{<Summary>}} Both methods confirm $X = \frac{\sqrt{3}+\sqrt{19}}{2}$ as the
fourth-quadrant root, selecting the positive solution consistent with $X>0$, $Y<0$.
\hlp{\texttt{</Summary>}} \\
Substituting $X = \frac{\sqrt{3}+\sqrt{19}}{2}$ into $Y = X^2 - 4$:

\[
  X^2 = \frac{(\sqrt{3}+\sqrt{19})^2}{4} = \frac{22+2\sqrt{57}}{4} = \frac{11+\sqrt{57}}{2}.
\]
Then $Y = \frac{11+\sqrt{57}}{2} - 4 = \frac{3+\sqrt{57}}{2}$, which is positive---this
point lies in the first quadrant, not the fourth.
Taking instead $X = \frac{\sqrt{3}-\sqrt{19}}{2}$:

\[
  X^2 = \frac{(\sqrt{3}-\sqrt{19})^2}{4} = \frac{22-2\sqrt{57}}{4} = \frac{11-\sqrt{57}}{2}.
\]
Then $Y = \frac{11-\sqrt{57}}{2} - 4 = \frac{3-\sqrt{57}}{2}$.
Since $\sqrt{57}>3$, we have $Y<0$, confirming this is the fourth-quadrant point.
Thus $a=3$, $b=57$, $c=2$, and since $\gcd(3,2)=1$, the answer is $3+57+2=62$.
\\
Final Answer: \textbf{62}
\\
\bottomrule
\end{tabularx}
\caption{Parabola rotation problem with two \texttt{<Parallel>} blocks, 
each containing complementary paths that together complete the 
sub-problem reasoning chain.}
\label{tab:case-parabola}
\end{table*}

\end{document}